\newtheorem{theorem}{Theorem}[section]
\newtheorem{lemma}[theorem]{Lemma}
\newtheorem{assumption}{Assumption}[section]
\theoremstyle{remark}
\newtheorem{remark}{Remark}[section]
\crefname{equation}{Eq.}{Eqs.}
\newcommand{\debug}[1]{#1}
\newcommand{\method}{\debug{\textsc{GALA}}\xspace}
\newcommand{\sgdmethod}{\debug{\textsc{SGD-GALA}}\xspace}
\newcommand{\adammethod}{\debug{\textsc{Adam-GALA}}\xspace}
\newcommand\Vector[1]{\mathbf{#1}}
\newcommand\vd{{\Vector{d}}}
\newcommand\ve{{\Vector{e}}}
\newcommand\vg{{\Vector{g}}}
\newcommand\vm{{\Vector{m}}}
\newcommand\vs{{\Vector{s}}}
\newcommand\vu{{\Vector{u}}}
\newcommand\vv{{\Vector{v}}}
\newcommand\vw{{\Vector{w}}}
\newcommand\vx{{\Vector{x}}}
\newcommand\vy{{\Vector{y}}}
\newcommand\bigO{\mathcal{O}}
\DeclareMathOperator*{\E}{\mathbb{E}}
\DeclareMathOperator*{\argmin}{arg\,min}
\newcommand{\reals}{\mathbb{R}}
\renewcommand{\epsilon}{\varepsilon}
\definecolor{comment}{RGB}{2,128, 9}
\begin{document}

\title{%
Online Learning-guided Learning Rate Adaptation via Gradient Alignment\footnotetext{\llap{\textsuperscript{1}}The authors are listed in alphabetical order.}
}

\author{Ruichen Jiang\thanks{Department of Electrical and Computer Engineering, The University of Texas at Austin, Austin, TX, USA  \{rjiang@utexas.edu, kavis@austin.utexas.edu, mokhtari@austin.utexas.edu\}} \and Ali Kavis$^*$ \and Aryan Mokhtari$^*$ }

\date{}
\maketitle

\begin{abstract}
    The performance of an optimizer on large-scale deep learning models depends critically on \emph{fine-tuning} the learning rate, often requiring an extensive grid search over base learning rates, schedules, and other hyperparameters. In this paper, we propose a principled framework called \method (\emph{Gradient Alignment-based Learning rate Adaptation}), which dynamically adjusts the learning rate by tracking the alignment between consecutive gradients and using a local curvature estimate. Guided by the convergence analysis, we formulate the problem of selecting the learning rate as a one-dimensional online learning problem. When paired with an online learning algorithm such as Follow-the-Regularized-Leader, our method produces a flexible, adaptive learning rate schedule that tends to increase when consecutive gradients are aligned and decrease otherwise. We establish a data-adaptive convergence rate for {normalized SGD equipped with} \method in the smooth, nonconvex setting.
   {{
   Empirically, {common optimizers such as SGD and Adam, when augmented with} \method, demonstrate robust performance across a wide range of initial learning rates
    and perform competitively without the need for tuning.}}
\end{abstract}

\section{Introduction}

Stochastic first-order (SFO) methods such as SGD~\citep{robbins1951stochastic}, AdaGrad~\citep{mcmahan2010adaptive, duchi2011adaptive}, and Adam~\citep{kingma15adam} have been the workhorse for training large-scale models due to their low computational overhead and strong empirical performance. 
Essentially, the practical performance of SFO methods relies on two components: the choice of base learning rate and how the learning rate evolves during training. 
The initial selection process is typically done by running a grid search over a range of values, which is referred to as \emph{tuning}. 
On top of that, the evolution of the learning rate throughout the execution is most commonly done by scaling it externally via a \emph{scheduler}.
Depending on the characteristics of the optimizer, the learning rate could also be dynamically updated by some internal mechanism during training. 

For instance, SGD is often run with a \textit{constant base} learning rate and coupled with a scheduler such as cosine annealing~\citep{loshchilov2017sgdr}, linear decay~\citep{defazio2023optimal} or step decay~\citep{ge2019step} that guides the learning rate following a \textit{predetermined} rule. 
Similarly, the so-called adaptive methods update the learning rate internally by accumulating the observed gradients with respect to a prescribed rule that usually tends its value below its initialization. Although optimizers have other parameters such as momentum and weight decay, they are often fixed at the beginning of the execution, whereas the learning rate evolves throughout the optimization process and thus has a larger impact on the final performance.

However, it is unclear how one could choose an ``empirically viable'' combination of base learning rate, optimizer, and scheduler, \emph{a priori}, without tuning over a manually chosen set of possibilities. In this paper, we study a theoretically principled approach to learning rate adaptation that is robust, flexible, and provable. To put things in perspective, we highlight three key shortcomings of common approaches in this domain.
\begin{figure}[t]
\begin{subfigure}{0.49\linewidth}
\includegraphics[width=0.99\linewidth]{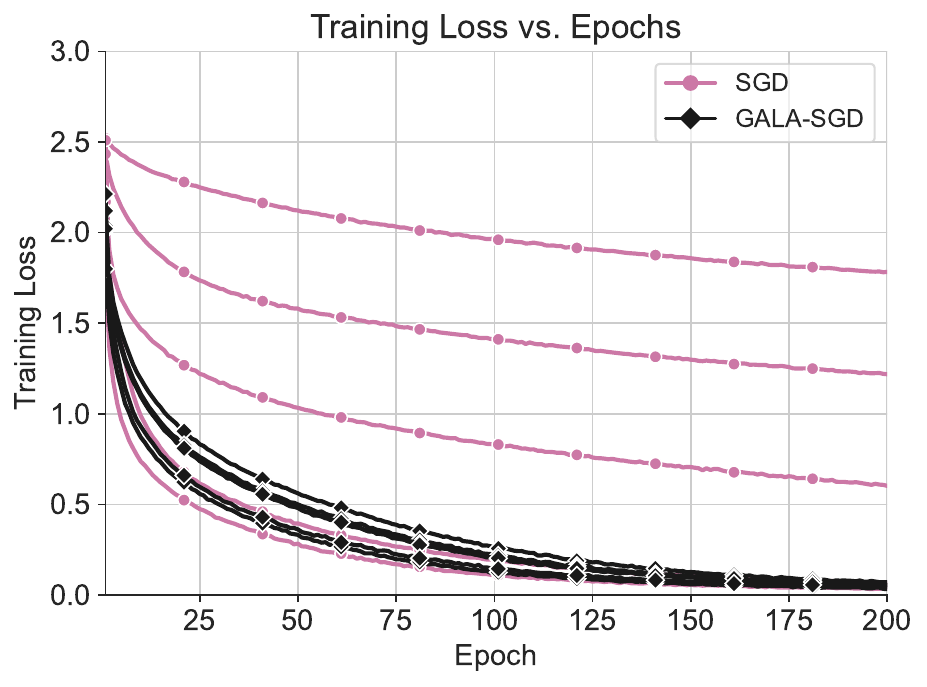}
\caption{SGD versus \sgdmethod}
\label{fig:sgd_vs_gala}
\end{subfigure}\hfill
\begin{subfigure}{0.49\linewidth}
\includegraphics[width=0.99\linewidth]{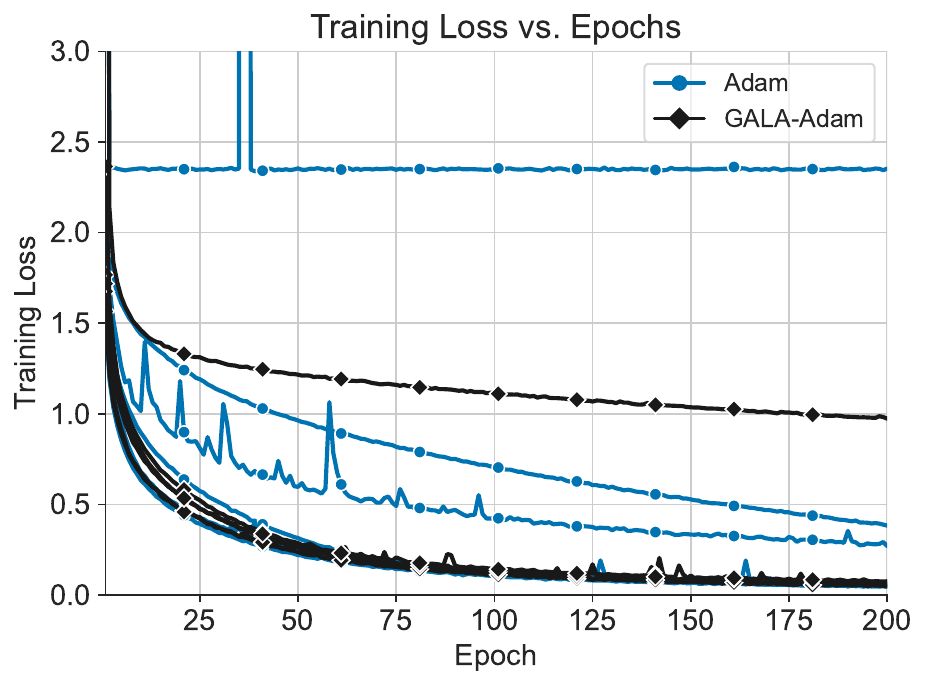}
\caption{Adam versus \adammethod}
\label{fig:adam_vs_gala}
\end{subfigure}
\caption{\label{fig:intro_plot} Training loss comparison for standalone SGD and Adam versus \method applied on their learning rates (\sgdmethod and \adammethod, respectively). The curves are obtained by running the algorithms with initial learning rates $[1, 10^{-1}, 10^{-2}, 10^{-3}, 10^{-4}, 10^{-5}]$.}
\end{figure}
\begin{enumerate} %
    \item {\textbf{Robustness:} Many optimizers, such as SGD and AdaGrad, are highly sensitive to the initial learning rate: an excessively large value can lead to divergence, while a very small one results in stagnation. 
    Ideally, we seek stable performance across a wide range of initial learning rates, allowing for robust training even with suboptimal initializations.}
    \item 
    {\textbf{Adaptivity:} In most cases, the value of the learning rate, either through internal dynamics or an external scheduler, tends to decay over time, limiting flexibility. For instance, the standard AdaGrad algorithm~\cite{mcmahan2010adaptive, duchi2011adaptive} reduces the learning rate below its initial value, and most commonly used schedulers induce decaying behavior on top of the learning rate. 
    A more desirable alternative would be a principled and adaptive scheduling mechanism capable of both increasing and decreasing the learning rate as needed.
    }
    \item {\textbf{Principled:} Most theoretical frameworks in this area are grounded in convex optimization. For instance, AdaGrad and its variants are supported by data-dependent regret bounds derived from online convex optimization. However, the decaying nature of their learning rate is not necessarily empirically optimal for non-convex landscapes.
    Developing a theoretically grounded approach tailored to non-convex problems is crucial for establishing provable performance guarantees.}

\end{enumerate}
While several existing works partially address these three limitations, none meet all the desired criteria simultaneously, as discussed in detail in Section~\ref{sec:related}.

{\color{green}{

}}

\paragraph{Our contributions:} 
Our goal is to unify all three ingredients in a single principled framework. To this end, we propose \textbf{G}radient \textbf{A}lignment-based \textbf{L}earning rate \textbf{A}daptation (\method), an online-learning guided framework for adjusting the learning rate on-the-fly by carefully monitoring the evolution of the optimization path. In particular:
\begin{enumerate} %
    \item Motivated by the convergence analysis, we construct a one-dimensional online loss function for selecting the learning rate, using the \emph{alignment between consecutive stochastic gradients} and a \emph{local estimate of the gradient Lipschitz constant}. The learning rate is then updated by performing a step for the one-dimensional online problem via any suitable algorithm.
    \item Our approach enables dynamic learning rate adaptation: it tends to \emph{increase} when the gradients are aligned and \emph{decrease} when misalignment is detected. We carefully moderate the alignment signal using a regularization term based on the local Lipschitz estimate, promoting {stability}.
    \item Theoretically, we provide a regret-based analysis and {establish convergence guarantees} for a variant of our algorithm for nonconvex objective functions with stochastic gradients.
    \item Empirically, our method demonstrates strong, stable performance across a wide range of hyperparameter settings. 
    To complement our theory-driven framework, we propose a heuristic implementation for other SFO methods.
\end{enumerate}
In fact, \cref{fig:intro_plot} provides a glimpse at the performance of our framework when applied on the learning rates of SGD and Adam. We show that \method helps mitigate sensitivity to the initialization of the learning rate while maintaining a competitive performance with respect to the best-performing runs of the standalone SGD and Adam.

\subsection{Related work}\label{sec:related}
In this section, we provide a comprehensive review of the related work in the context of our paper. 

\paragraph{Classical stochastic first-order methods}
Dating back to the seminal work~\cite{robbins1951stochastic}, the theoretical behavior of SGD and its many variants have been extensively studied. 
Considering general smooth functions, it is well-known that the learning rate must decrease at a rate of $\eta_t = O(1/\sqrt{t})$ where $t$ is the iteration counter and also satisfy $\eta_t \leq O(1 / L)$.
\citet{ghadimi2013stochastic} established that SGD with a properly chosen learning rate achieves a complexity of $\mathcal{O}({\epsilon^{-2}} + {\sigma^2}\epsilon^{-4})$, interpolating between deterministic and stochastic rates and matching the lower bounds~\citep{arjevani2023lower}. 
However, the choice of the learning rate depends on the problem parameters, i.e., $L$, $\sigma$, which are typically unknown and prohibitively difficult to estimate in practice. 
Similar requirements are in place for the learning rate when the objective function is $\rho$-weakly convex~\citep{davis2020stochastic}. 

\paragraph{Adaptive and parameter-free optimization methods}
AdaGrad was introduced in two concurrent works~\cite{mcmahan2010adaptive,duchi2011adaptive}
for minimizing a sequence of \emph{online} convex losses. The main idea is to compute a time-varying learning rate by accumulating squared norms of stochastic gradients.
This fundamental idea paved the way for many algorithms such as Adam~\citep{kingma15adam}, RMSProp~\cite{tieleman2012divide}, Adadelta~\cite{zeiler2012adadelta}, and their variants, which 
demonstrate strong empirical performance. Beyond the online optimization setup, they have been shown to automatically adapt to problem-dependent parameters such as smoothness, noise variance, and bounds on gradients. 
{Their convergence properties have been well-studied for the convex setting~\citep{levy2017online, levy2018online, kavis2019universal, joulani2020simpler, AVCL+22, liu2023on, rodomanov2024universal}} and non-convex setting \citep{li2019convergence, ward2020adagrad, li2020high, kavis2021high, gadat2022asymptotic, faw2022power, attia2023sgd, liu2023high}. However, a downside of these first-generation adaptive methods is the sensitivity to initial learning rate, dampening the practical benefits of their data-adaptive design.

{To remedy this, \emph{parameter-free} optimization \citep{carmon2022making, ivgi2023dog, khaled2023dowg, kreisler2024accelerated, attia2024parameterfree} has gained popularity with a focus on augmenting robustness. Essentially, they multiply AdaGrad-type learning rate with a scaling factor that iteratively improves the initial learning rate estimate. Although this helps increase from the initial value, {the scaling factor is practically bounded}, restricting flexibility.}
On a related front, a different line of work~\cite{malitsky2020adaptive, malitsky2024adaptive, li2024simple} study parameter-free gradient methods with local curvature estimation for convex, deterministic problems. {They are separated from AdaGrad-type methods with non-monotone learning rate that estimates time evolution of local smoothness. A downside to these methods is empirical stability; when the increasing behavior is not tamed properly, optimization performance might be unstable 
especially for nonconvex problems. Therefore, it is of utmost importance to strike the right balance between flexibility and stability.}

\paragraph{Hypergradient descent} Originally proposed as a heuristic for stochastic optimization in \cite{almeida1999parameter}, hypergradient descent updates the learning rate by computing the gradient with respect to the learning rate itself. This idea was later rediscovered and updated to modern deep learning by~\cite{rubio2017convergence,baydin2018online}, with several subsequent works refining this approach~\cite{chandra2022gradient,ozkaramada2024}. More recently, the authors in~\cite{gao2024gradient,chu2025provable} provided convergence guarantees from an online learning perspective, though their analysis is limited to deterministic convex settings. 

\paragraph{Online learning-guided methods}
Drawing insights from parameter-free online learning~\cite{orabona2016coin}, \citet{orabona2017training} reformulate SGD as a coin-betting game and apply a betting algorithm to eliminate the need for a manually tuned learning rate. They also provide convergence guarantees for convex and quasi-convex objectives. \citet{cutkosky2023mechanic} proposed a general technique for adaptively scaling any base optimization algorithm and learning rate schedule, which is grounded in a black-box reduction framework from parameter-free online learning~\cite{cutkosky2018black}. {The work most relevant to ours is that of} \citet{Zhuang2019Surrogate}, who consider non-convex stochastic optimization and introduce a surrogate loss technique for selecting the learning rate.  {However, their method requires knowledge of problem-dependent parameters (e.g., gradient's Lipschitz constant), which limits its flexibility.}

\section{Preliminaries}\label{sec:prelim}
We consider the stochastic optimization problem 
\begin{equation*}
    \min_{\vx \in \reals^d}\; F(\vx) = \mathbb{E}_{\xi \sim \mathcal{D}}[f(\vx; \xi)],
\end{equation*}
where $f(\cdot; \xi)$ is a random function indexed by a random variable $\xi$ drawn from distribution $\mathcal{D}$. The objective function 
$F: \reals^d \rightarrow \reals$ is assumed to be differentiable, possibly nonconvex and bounded from below, i.e., $F(\vx) > -\infty$. Moreover, we make the following two assumptions, which are standard in the optimization literature. 
\begin{assumption} \label{assum:smoothness}
The gradient of $F$ is $L$-Lipschitz continuous, i.e., 
$    \| \nabla F(\vx) - \nabla F(\vy) \| \leq L \| \vx - \vy \|$ for any $\vx$ and $\vy$.
\end{assumption}

\begin{assumption}\label{assum:variance}
The stochastic gradient has bounded variance of $\sigma^2$, i.e., $\E[\|\nabla F(\vx) - \nabla f(\vx; \xi)\|^2] \!\leq \! \sigma^2$ for any $\vx \in \reals^d$. 
\end{assumption}

\subsection{Background: online learning}
Let us briefly introduce the online learning framework and establish the groundwork necessary within the context of our approach. In the online learning framework, a learner makes decisions iteratively over rounds. At each round $t = 1, \cdots, T$:
\begin{enumerate}
    \item The learner makes a decision $\vx_t\in \mathcal X$ from a bounded set of actions;
    \item The environment/adversary reveals the loss function $\ell_t(\cdot)$;
    \item The learner suffers the loss $\ell_t(\vx_t)$.
\end{enumerate}
The learner can use the history of decisions and losses to make a new decision per round. The learner chooses its action $\vx_t$ in round $t$ \emph{prior to} observing the loss $\ell_t(\cdot)$. The performance of the learner is measured by \emph{regret}, which is defined as the difference between the cumulative loss of the learner compared against a fixed action $\vx$:
\begin{equation} \label{eq:regret}
    \mathrm{Reg}_T(\vx) = \sum_{t=1}^{T} (\ell_t(\vx_t) - \ell_t(\vx)).
\end{equation}
The goal is to achieve \emph{sublinear regret}, i.e., $\mathrm{Reg}_T(\vx) = o(T)$, such that the time average of regret goes to zero as $T \to \infty$, meaning the learner performs as well as the fixed strategy in the limit.

\section{Online learning rate selection}\label{sec:online_learning}
We begin by introducing a simplified template that outlines the fundamentals of our design. Our primary goal is to provide insight into the idea of gradient alignment, explain our adaptive strategy, and establish the foundation for the online learning formulation of the learning rate. 

Consider the SGD update rule 
\begin{equation}\label{eq:SGD}
    \vx_{t+1} = \vx_t - \eta_t \vg_t(\vx_t), \quad \vg_t(\vx_t) = \nabla f(\vx_t; \xi_t),
\end{equation}
where $\xi_t\sim \mathcal{D}$ is a random sample drawn from the distribution $\mathcal{D}$ at iteration $t$.
Our goal is to choose a sequence of learning rates guided by the progress of the algorithm, as measured by the function value difference $F(\vx_{t+1}) - F(\vx_t)$. At this point, we deviate from the classical analysis; inspired by~\cite{cutkosky2023optimal}, we apply the fundamental theorem of calculus to get  
\begin{equation}\label{eq:FTC}
    F(\vx_{t+1}) - F(\vx_t) = \langle \bm{\nabla}_t, \vx_{t+1} - \vx_t \rangle = -\eta_t \langle \bm{\nabla}_t, \vg_t(\vx_t) \rangle,
\end{equation}
where $\bm{\nabla}_t = \int_{0}^1 \nabla F(\vx_t + \lambda(\vx_{t+1}-\vx_t))\,d\lambda$ denotes the average gradient along the line segment between $\vx_t$ and $\vx_{t+1}$. Note that the right-hand side of \eqref{eq:FTC} concerns the \emph{alignment} between the gradients~$\bm{\nabla}_t$ and $\vg_t(\vx_t)$ and serves as a useful signal for adjusting the learning rate. When the alignment term is positive, it indicates that the gradients point in similar directions and increasing the learning rate may lead to greater progress. Conversely, a negative alignment implies opposing directions, in which case a smaller learning rate may be more appropriate.  

However, computing $\bm{\nabla}_t$ is generally intractable, as it involves the true gradient and an integral. A key observation in~\cite{zhang2020complexity,cutkosky2023optimal} is that an unbiased estimate of $\bm{\nabla}_t$ can be constructed by evaluating the gradient at a random point along the line segment. Specifically, let $\lambda_t$ be a random variable uniformly distributed over $[0,1]$, and let $\xi_t'$ be an independent sample from the distribution $\mathcal{D}$. Then for $\vw_{t} = \vx_t + \lambda_t(\vx_{t+1} - \vx_t)$ and $\vg_t'(\vw_t) = \nabla f(\vw_t; \xi_t')$, we have $\bm{\nabla}_t = \E_{\lambda_t}[\nabla F(\vw_t)] = \E_{\lambda_t, \xi_t'}[\vg_t'(\vw_t)]$, which implies
\begin{equation}\label{eq:integral}
    \textstyle F(\vx_{t+1}) - F(\vx_t) = %
    -\eta_t\E_{\lambda_t,\xi_t'}[\langle \vg_t'(\vw_t), \vg_t(\vx_t)\rangle]. 
\end{equation}
To maximize the decrease in the function value, \cref{eq:integral} suggests that a natural objective is to minimize $-\eta_t\E_{\lambda_t,\xi_t'}[\langle \vg_t'(\vw_t), \vg_t(\vx_t)\rangle]$. However, this approach comes with two issues.
Let us begin with the first point, which is related to the convergence metric. This approach only leads to an upper bound on $\frac{1}{T}\sum_{t=0}^{T-1} \E[\langle \vg_t'(\vw_t), \vg_t(\vx_t)\rangle]$, which does not directly provide a meaningful bound on gradient norm of $F$, which is the standard metric we would like to obtain. Our idea is to decompose the inner product as $\langle \vg_t'(\vw_t), \vg_t(\vx_t)\rangle = \langle \vg_t'(\vx_t), \vg_t(\vx_t)\rangle + \langle \vg_t'(\vw_t) - \vg_t'(\vx_t) , \vg_t(\vx_t) \rangle$, where $\vg_t'(\vx_t) = \nabla f(\vx_t; \xi_t)$. Note that the first term leads to $\E[ \langle \vg_t'(\vx_t), \vg_t(\vx_t)\rangle] = \E[\|\nabla F(\vx_t)\|^2]$ and the second term can be controlled using the Lipschitz constant of the gradient. 

The second issue is that the minimization of the right-hand side of \cref{eq:integral} with respect to the learning rate $\eta_t$ is an \emph{implicit} problem. The objective depends on the interpolated point $\vw_t$, which could be determined only \emph{after} the learning rate $\eta_t$ is chosen. The solution is to cast the learning rate selection as an \emph{online learning problem}, and derive a sequence of online loss functions that will govern the selection process. We combine and formalize both ideas in the following lemma.
\begin{lemma}\label{lem:regret}
    Define the local Lipschitz estimate $L_t = \frac{\|\vg'_t(\vw_t) - \vg'_t(\vx_t)\|}{\|\vw_t - \vx_t\|}$ and the surrogate loss function 
\begin{equation}\label{eq:surrogate_loss}
        \ell_t(\eta) \triangleq   - \eta \langle \vg_t'(\vw_t), \vg_t(\vx_t)\rangle + \frac{L_t \|\vg_t(\vx_t)\|^2\eta^2}{2}. 
    \end{equation}
    Suppose that Assumption~\ref{assum:variance} holds and $L_t \leq L^{\max}$ for any $t \geq 0$ with probability one.
    Then we have  
\begin{equation}\label{eq:sum_grad_squares}
        \sum_{t=0}^{T-1}\E[(\eta -\eta^2 L^{\mathrm{max}}) \|\nabla F(\vx_t)\|^2] \leq \E\left[ F(\vx_0) - F(\vx_T)  + L^{\mathrm{max}} \sum_{t=0}^{T-1} \eta^2 \sigma^2 + \sum_{t=0}^{T-1}(\ell_t(\eta_t) - \ell_t(\eta))\right]. 
    \end{equation}
\end{lemma}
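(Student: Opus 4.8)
The plan is to keep the regret $\sum_{t}(\ell_t(\eta_t)-\ell_t(\eta))$ intact on the right-hand side and reduce everything else to a single per-iteration inequality. First I would exploit the unbiased-estimator identity \eqref{eq:integral}: conditioning on the trajectory through $\vx_t$ (so that $\vx_{t+1}$, and hence $\eta_t$ and $\vg_t(\vx_t)$, are fixed while only $\lambda_t,\xi_t'$ are fresh), it reads $F(\vx_{t+1})-F(\vx_t) = -\eta_t\,\E_{\lambda_t,\xi_t'}[\langle\vg_t'(\vw_t),\vg_t(\vx_t)\rangle]$. Since the first term of $\ell_t(\eta_t)$ is exactly $-\eta_t\langle\vg_t'(\vw_t),\vg_t(\vx_t)\rangle$, taking full expectations gives
\[
    \textstyle\sum_{t=0}^{T-1}\E[\ell_t(\eta_t)] = \E[F(\vx_T)-F(\vx_0)] + \sum_{t=0}^{T-1}\E[R_t], \qquad R_t \triangleq \tfrac{1}{2}L_t\|\vg_t(\vx_t)\|^2\eta_t^2 \ge 0.
\]
Substituting this into the claimed bound cancels the two function-value terms, so it suffices to prove, for each $t$, the per-iteration inequality
\[
    (\eta-\eta^2 L^{\max})\,\E[\|\nabla F(\vx_t)\|^2] + \E[\ell_t(\eta)] - \E[R_t] \;\le\; L^{\max}\eta^2\sigma^2 .
\]

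For the comparator loss I would split $\langle\vg_t'(\vw_t),\vg_t(\vx_t)\rangle = \langle\vg_t'(\vx_t),\vg_t(\vx_t)\rangle + \langle\vg_t'(\vw_t)-\vg_t'(\vx_t),\vg_t(\vx_t)\rangle$. Because $\xi_t$ and $\xi_t'$ are independent and both stochastic gradients are conditionally unbiased, the first piece obeys $\E[\langle\vg_t'(\vx_t),\vg_t(\vx_t)\rangle] = \E[\|\nabla F(\vx_t)\|^2]$. The resulting $-\eta\,\E[\|\nabla F(\vx_t)\|^2]$ inside $\E[\ell_t(\eta)]$ exactly cancels the bare $\eta\,\E[\|\nabla F(\vx_t)\|^2]$ from the left-hand side, leaving a leftover $-\eta^2 L^{\max}\E[\|\nabla F(\vx_t)\|^2]$ to absorb, together with the cross term and the two quadratics $\tfrac{\eta^2}{2}L_t\|\vg_t(\vx_t)\|^2$ and $-R_t$.

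The crux — and the step I expect to be the main obstacle — is the cross term. Cauchy--Schwarz together with the very definition of $L_t$ gives $-\langle\vg_t'(\vw_t)-\vg_t'(\vx_t),\vg_t(\vx_t)\rangle \le \|\vg_t'(\vw_t)-\vg_t'(\vx_t)\|\,\|\vg_t(\vx_t)\| = L_t\|\vw_t-\vx_t\|\,\|\vg_t(\vx_t)\| = L_t\lambda_t\eta_t\|\vg_t(\vx_t)\|^2$, which reintroduces the \emph{realized} step size and produces a mismatched $\eta\eta_t$ factor that does not fit the clean $\eta$-only target. The resolution is AM--GM, $\eta\eta_t L_t\|\vg_t(\vx_t)\|^2 \le \tfrac12\eta_t^2 L_t\|\vg_t(\vx_t)\|^2 + \tfrac12\eta^2 L_t\|\vg_t(\vx_t)\|^2$: the first summand is exactly $R_t$ and is annihilated by the $-R_t$ already present, so only $\eta^2$-terms survive. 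This cancellation is precisely what the quadratic regularizer in $\ell_t$ is engineered to produce, and I would flag it as the conceptual heart of the argument. (A minor point: when $\vw_t=\vx_t$ the ratio defining $L_t$ is read so that $\|\vg_t'(\vw_t)-\vg_t'(\vx_t)\| = L_t\|\vw_t-\vx_t\|$ still holds trivially, leaving the bound unaffected.)

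Collecting the surviving terms yields an upper bound of the form $\eta^2\,\E[L_t\|\vg_t(\vx_t)\|^2]$. Using $L_t\le L^{\max}$ with probability one and the bounded-variance estimate $\E[\|\vg_t(\vx_t)\|^2] \le \E[\|\nabla F(\vx_t)\|^2]+\sigma^2$ from Assumption~\ref{assum:variance}, this is at most $\eta^2 L^{\max}(\E[\|\nabla F(\vx_t)\|^2]+\sigma^2)$. The term $\eta^2 L^{\max}\E[\|\nabla F(\vx_t)\|^2]$ cancels the leftover $-\eta^2 L^{\max}\E[\|\nabla F(\vx_t)\|^2]$, and what remains is exactly $L^{\max}\eta^2\sigma^2$, which establishes the per-iteration inequality and hence, after summing over $t$, the lemma.
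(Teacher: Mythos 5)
Your proof is correct and rests on essentially the same ingredients as the paper's own argument: the unbiased line-integral identity \eqref{eq:integral}, the split of $\langle \vg_t'(\vw_t),\vg_t(\vx_t)\rangle$ into the same-point alignment term (whose expectation is $\E[\|\nabla F(\vx_t)\|^2]$ by independence of $\xi_t,\xi_t'$) plus a cross term controlled via Cauchy--Schwarz and the definition of $L_t$, Young's inequality on the mismatched $\eta\eta_t$ factor so that the $\eta_t^2$ half is absorbed by the quadratic regularizer, and finally $L_t\le L^{\max}$ together with the variance bound $\E[\|\vg_t(\vx_t)\|^2]\le \E[\|\nabla F(\vx_t)\|^2]+\sigma^2$. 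The only difference is bookkeeping---you first cash in $\sum_t \E[\ell_t(\eta_t)]$ against the telescoped function decrease and then verify a per-iteration comparator inequality, whereas the paper carries $\ell_t(\eta_t)-\ell_t(\eta)$ along in a single one-step bound---an algebraically equivalent rearrangement of the same argument.
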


\DontPrintSemicolon
\begin{algorithm}[!t]
\SetAlgoLined
\KwIn{Initial point $\vx_0$, initial learning rate $\eta_0$, maximum learning rate ${\eta}^{\max}$, $\delta > 0$}
\For{$t = 0$ \KwTo $T$}
{
    Sample $\xi_t \sim \mathcal{D}$ and compute $\vg_t(\vx_t) = \nabla f(\vx_t; \xi_t)$\;
    $\vx_{t+1} = \vx_{t} - \eta_t \vg_t(\vx_t)$\;
    {Sample $\xi_t' \sim \mathcal{D}$ and compute $\vg_t'(\vx_t) = \nabla f(\vx_t; \xi_t')$}\;
    Sample $\vs_t\sim \text{Uniform}[0,1]$, compute $\vw_t = \vx_t + s_t(\vx_{t+1}-\vx_t)$ and ${\vg}_t'(\vw_t) = \nabla f(\vw_t;\xi_t') $ \label{line:sampling}\;
    Compute $L_t = \frac{\|{\vg}_t'(\vw_t) - \vg_t'(\vx_t)\|}{\|\vw_t - \vx_t\|}$\;
    $\eta_{t+1} = \mathrm{clip}_{[0, {\eta}^{\max}]}\left(\frac{ \sum_{s=0}^{t}\langle \vg_s'(\vw_s), \vg_s(\vx_s) \rangle}{\delta + \sum_{s=0}^{t} L_{s}\| \vg_s(\vx_s)\|^2}\right)$
}
\caption{\sgdmethod}\label{alg:GD_FTL}
\end{algorithm}

As shown in \eqref{eq:surrogate_loss}, our surrogate loss function $\ell_t(\eta)$ consists of two terms. The first term measures the \emph{alignment} between two consecutive (stochastic) gradients $\vg_t'(\vw_t)$ and $\vg_t(\vx_t)$, and the second term is a quadratic regularization term that depends on our local estimate of the Lipschitz constant $L_t$. The online nature of the problem is due to the fact that both $\vg'(\vw_t)$ and $L_t$ can only be computed after the learning rate $\eta_t$ is chosen. Moreover, $\eta$ in \eqref{eq:sum_grad_squares} is the comparator of our online learning problem and it can be chosen \emph{arbitrarily} in our analysis. If we manage to achieve a low regret of the online learning problem (as we show in \cref{sec:convergence}), then a proper choice of $\eta$ will lead to a complexity of $\bigO(\epsilon^{-2} + \sigma^2 \epsilon^{-4})$.

\begin{remark}
    Our approach is inspired by both \cite{cutkosky2023optimal} and \cite{Zhuang2019Surrogate}. Compared to~\cite{cutkosky2023optimal}, the key difference is that their method uses online learning to guide the choice of the update direction, whereas we focus on selecting the learning rate. In contrast to~\cite{Zhuang2019Surrogate}, our method differs in two major ways: (i) we estimate the local Lipschitz constant on the fly, instead of relying on a global Lipschitz estimate; and (ii) for the first term, we use the alignment between two stochastic gradients evaluated at different points $\vw_t$ and $\vx_t$, while \citet{Zhuang2019Surrogate} use gradients at the same point.  
\end{remark}

The next step is using an online learning algorithm that will operate on the loss sequence $\ell_t$ to update our learning rate $\eta_t$. Since the loss functions are quadratic in their input, we have several options to choose from. As an example, the Follow-the-Regularized-Leader (FTRL) algorithm is given by
\begin{equation*}
    \eta_{t+1} = \argmin_{\eta \in [0, {\eta}^{\max}]} \left\{\sum_{s=0}^{t} \ell_t(\eta) + \frac{\delta}{2}\eta^2 \right\}, 
\end{equation*}
where ${\eta}^{\max}$ is the maximal learning rate and $\delta \geq 0$ is a user-defined constant to ensure stability. Using the definition of \eqref{eq:surrogate_loss} and the FTRL update, we obtain the 
following closed-form expression for~$\eta_{t+1}$: 
\begin{equation}\label{eq:eta_FTRL}
    \eta_{t+1} = \mathrm{clip}_{[0,{\eta}^{\max}]}\left(\frac{\sum_{s=0}^{t}\langle \vg_s'(\vw_{s}), \vg_s(\vx_s) \rangle}{\delta+ \sum_{s=0}^t L_{s}\|\vg_s(\vx_s)\|^2}\right), 
\end{equation}
where $\mathrm{clip}_{[0,{\eta}^{\max}]}(\cdot)$ denotes the operation that clips a real-valued input to the interval $[0,{\eta}^{\max}]$. 
For convenience, we summarize our method in Algorithm~\ref{alg:GD_FTL}. 
\begin{remark}
    The learning rate incorporates directional information along the optimization path through the alignment term: when the gradients are aligned, the learning rate is encouraged to increase; when they are misaligned, it decreases. Additionally, the quadratic regularization term moderates the learning rate update based on the magnitude of observed gradients. Note that this adaptive behavior is an inherent feature of our online learning-guided learning rate and holds by default for various choices of online learners, such as OGD~\citep{zinkevich2003online, Orabona2019}.
\end{remark}
\begin{remark}
    {For numerical stability, we pick FTRL as our choice of online learner to update the learning rate of the algorithm but FTL is also applicable since the surrogate losses are quadratic. In either case, the resulting update for the learning rate is independent of the initialization; only the very first step is taken using the base learning rate.}
\end{remark}

\section{Convergence analysis}\label{sec:convergence}
In this section, we analyze a variant of our proposed method in Algorithm~\ref{alg:GD_FTL} and establish its convergence rate for stochastic nonconvex optimization. Instead of using the standard SGD update rule in \eqref{eq:SGD}, we adopt the normalized SGD with momentum~\cite{cutkosky2020momentum}. As we will show, the main theoretical advantage of using a normalized update is that it simplifies the surrogate loss function, making the regret bound easier to establish. Specifically, we consider the update rule 
\begin{equation}\label{eq:NGD}
     \vx_{t+1} = \vx_t - \eta_t \frac{\vm_t}{\|\vm_t\|}, \qquad \vm_t = (1-\alpha)\vm_{t-1} + \alpha \nabla f(\vx_t; \xi_t),
\end{equation}
where $\alpha \in (0,1]$ is the momentum parameter. The normalization step ensures that the learning rate $\eta_t$ directly controls the distance between $\vx_{t+1}$ and $\vx_t$, thus promoting stability. However, normalization can also amplify the noise in the stochastic gradient. To mitigate this, we apply an exponential moving average in \eqref{eq:NGD}, which reduces variance and is governed by the momentum parameter $\alpha$. 

Due to the different update rule in \eqref{eq:NGD}, the surrogate loss function must be modified accordingly. Specifically, we define a new surrogate loss function as
\begin{equation}\label{eq:surrogate_loss_N}
    \ell^{\mathrm{N}}_t(\eta) = -\eta \Bigl\langle \vg_t'(\vw_t), \frac{\vm_t}{\|\vm_t\|} \Bigr\rangle + \Bigl(\frac{L_t}{2}+ \frac{4(1-\alpha) \tilde{L}_t}{3\alpha} \Bigr)\eta^2,
\end{equation}
where $\tilde{L}_t = \frac{\|\vg_t'(\vw_t) - \vg_t'(\vx_t)\|}{\|\vw-\vx_t\|}$ is a second local Lipschitz estimate. There are three main differences compared with \eqref{eq:surrogate_loss}. First, the linear term in \eqref{eq:surrogate_loss_N} measures the alignment between the gradient $\vg_t'(\vw_t)$ and the normalized update direction $\frac{\vm_t}{\|\vm_t\|}$, rather than with the stochastic gradient $\vg_t(\vx_t)$. In addition, the quadratic term is independent of the norm of the stochastic gradient $\|\vg_t(\vx_t)\|$ due to normalization step. Moreover, it includes an additional regularization term that depends on  the momentum parameter $\alpha$ and the local Lipschitz estimate $\tilde{L}_t$, which arises from the analysis of momentum. 
In the following theorem, we establish the convergence rate of the update rule in \eqref{eq:NGD} in terms of the regret with respect to the new surrogate loss in \eqref{eq:surrogate_loss_N}. The proof can be found in Appendix~\ref{appen:NGD}.

\begin{theorem}\label{thm:NGD}
    Let $\{\vx_t\}_{t=0}^{T-1}$ be the iterates following the update rule in \eqref{eq:NGD} and suppose that $\eta_t \leq {\eta}^{\max}$ for all $t$. Recall that 
    $L_t = \frac{\|\vg_t'(\vw_t) - \vg_t'(\vx_t)\|}{\|\vw-\vx_t\|}$, $\tilde{L}_t = \frac{\|\vg_t'(\vx_{t+1}) - \vg_t'(\vx_{t})\|}{\|\vx_{t+1} - \vx_t\|}$, and the surrogate loss function $\ell^{\mathrm{N}}_t(\eta)$ defined in \eqref{eq:surrogate_loss_N}. 
    Moreover, define the associated regret $\mathrm{Reg}^{\mathrm{N}}_T \triangleq \max_{\eta \in [0, {\eta}^{\max}]} \sum_{t=0}^{T-1} (\ell^{\mathrm{N}}_t(\eta_t) - \ell^{\mathrm{N}}_t(\eta))$, the initial function value gap $\Delta_F \triangleq F(\vx_0) - F(\vx^*)$, the average Lipschitz estimate $L_T^{\mathrm{avg}} = \max\Bigl\{\E\Bigl[\frac{1}{T}\sum_{t=0}^{T-1} L_t \Bigr],\E\Bigl[\frac{1}{T}\sum_{t=0}^{T-1} \tilde{L}_t\Bigr]\Bigr\}$.
    Then if we choose $\alpha = \min\{\frac{\sqrt{{{L}^{\mathrm{avg}}_T(\Delta_F + \mathrm{Reg}^{\mathrm{N}}_T)}}}{\sigma \sqrt{T}},1\}$, it holds that 
    \begin{align*}
        \frac{1}{T}\sum_{t=0}^{T-1} \E[\|\nabla F(\vx_t)\|] &= \bigO\Bigl(\frac{\sigma^{1/2}({L}^{\mathrm{avg}}_T(\Delta_F + \mathrm{Reg}^{\mathrm{N}}_T))^{1/4}}{T^{1/4}}  +\frac{\sigma^2}{\sqrt{{L}^{\mathrm{avg}}_T(\Delta_F + \mathrm{Reg}^{\mathrm{N}}_T) T}} \\
  & \phantom{{}\leq{}} +\frac{\sqrt{L_T^{\mathrm{avg}}(\Delta_F + \mathrm{Reg}^{\mathrm{N}}_T)}}{\sqrt{T}} 
  + \frac{\Delta_F + \mathrm{Reg}^{\mathrm{N}}_T}{{\eta}^{\max}T}\Bigr). 
    \end{align*}
\end{theorem}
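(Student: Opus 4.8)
The plan is to adapt the online-learning-to-optimization template behind \cref{lem:regret} to the normalized momentum update \eqref{eq:NGD}. Write $\vepsilon_t \triangleq \vm_t - \nabla F(\vx_t)$ for the momentum error, and let $\E_t[\cdot]$ denote expectation over the fresh samples $(\lambda_t,\xi_t')$ conditioned on everything through $\vx_{t+1}$. \textbf{Step 1 (learner identity).} Applying the fundamental theorem of calculus exactly as in \eqref{eq:FTC}--\eqref{eq:integral} to the normalized step gives $F(\vx_{t+1}) - F(\vx_t) = -\eta_t\langle \bm{\nabla}_t, \vm_t/\|\vm_t\|\rangle = \E_t[-\eta_t\langle \vg_t'(\vw_t), \vm_t/\|\vm_t\|\rangle]$, since $\vm_t$ is fixed given the history and $\E_t[\vg_t'(\vw_t)] = \bm{\nabla}_t$. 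Comparing with \eqref{eq:surrogate_loss_N} turns this into $F(\vx_{t+1}) - F(\vx_t) = \E_t[\ell^{\mathrm{N}}_t(\eta_t) - (\tfrac{L_t}{2} + \tfrac{4(1-\alpha)\tilde L_t}{3\alpha})\eta_t^2]$, so the learner's loss accounts for the per-step decrease plus its own regularizer. \textbf{Step 2 (comparator bound).} For a fixed comparator $\eta$ I split $\langle \vg_t'(\vw_t), \vm_t/\|\vm_t\|\rangle = \langle \vg_t'(\vx_t), \vm_t/\|\vm_t\|\rangle + \langle \vg_t'(\vw_t)-\vg_t'(\vx_t), \vm_t/\|\vm_t\|\rangle$, bound the second term by $\|\vg_t'(\vw_t)-\vg_t'(\vx_t)\| = L_t\|\vw_t-\vx_t\| = L_t\lambda_t\eta_t$, and invoke the elementary inequality $\langle \nabla F(\vx_t), \vm_t/\|\vm_t\|\rangle \ge \|\nabla F(\vx_t)\| - 2\|\vepsilon_t\|$ (from $\vm_t = \nabla F(\vx_t) + \vepsilon_t$) after taking $\E_t$ and using $\E_t[\vg_t'(\vx_t)] = \nabla F(\vx_t)$. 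This yields $\E_t[\ell^{\mathrm{N}}_t(\eta)] \le -\eta\|\nabla F(\vx_t)\| + 2\eta\|\vepsilon_t\| + \eta\eta_t\E_t[L_t\lambda_t] + \E_t[\tfrac{L_t}{2}+\tfrac{4(1-\alpha)\tilde L_t}{3\alpha}]\eta^2$.

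\textbf{Step 3 (combine through the regret).} I sum the Step~1 identity so the function values telescope, substitute the regret bound $\sum_t \ell^{\mathrm{N}}_t(\eta_t) \le \sum_t \ell^{\mathrm{N}}_t(\eta) + \Reg^{\mathrm{N}}_T$ together with the comparator bound, and rearrange to isolate $\sum_t\E[\eta\|\nabla F(\vx_t)\|]$. Using $F(\vx_T)\ge F(\vx^*)$ on the telescoped term and $\lambda_t\le1$, the $L_t$ cross term $\eta\eta_t L_t\lambda_t$ combines with the reserved $-\tfrac{L_t}{2}\eta_t^2$ via AM--GM ($\eta\eta_t \le \tfrac12(\eta^2+\eta_t^2)$) to leave only $\E[L_t]\eta^2$, while I deliberately keep the $-\tfrac{4(1-\alpha)\tilde L_t}{3\alpha}\eta_t^2$ reserve intact. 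The result is the master inequality $\sum_t\E[\eta\|\nabla F(\vx_t)\|] \le \Delta_F + \E[\Reg^{\mathrm{N}}_T] + 2\eta\sum_t\E\|\vepsilon_t\| + \eta^2\sum_t\E[L_t] + \tfrac{4(1-\alpha)}{3\alpha}\eta^2\sum_t\E[\tilde L_t] - \tfrac{4(1-\alpha)}{3\alpha}\sum_t\E[\tilde L_t\eta_t^2]$.

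\textbf{Step 4 (momentum error, the crux).} The heart of the argument is bounding $\sum_t\E\|\vepsilon_t\|$. Unrolling the recursion $\vepsilon_t = (1-\alpha)\vepsilon_{t-1} + (1-\alpha)(\nabla F(\vx_{t-1}) - \nabla F(\vx_t)) + \alpha(\vg_t(\vx_t) - \nabla F(\vx_t))$, I separate a pure-noise component from a bias component. For the noise, independence and Jensen over the geometric weights give a variance-reduced per-step bound of order $\sqrt\alpha\,\sigma$; for the bias, I bound each increment by the local smoothness $\|\nabla F(\vx_s) - \nabla F(\vx_{s+1})\| \le \tilde L_s\|\vx_{s+1}-\vx_s\| = \tilde L_s\eta_s$ and swap the order of summation in the geometric weights, producing $\sum_t\E\|\vepsilon_t\| \le \tfrac{\|\vepsilon_0\|}{\alpha} + \sqrt\alpha\,\sigma T + \tfrac{1-\alpha}{\alpha}\sum_s\E[\tilde L_s\eta_s]$.

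\textbf{Step 5 (absorb and optimize).} Substituting this into the master inequality, the only surviving $\eta_t$-dependence is the bias cross term $\tfrac{2\eta(1-\alpha)}{\alpha}\tilde L_t\eta_t$, which I absorb against the reserved $-\tfrac{4(1-\alpha)}{3\alpha}\tilde L_t\eta_t^2$: completing the square gives $2\eta\eta_t - \tfrac43\eta_t^2 \le \tfrac34\eta^2$, so every $\tilde L_t$ term collapses to a multiple of $\tfrac{1-\alpha}{\alpha}\E[\tilde L_t]\eta^2$. Dividing by $\eta T$, upper-bounding the averages of $L_t$ and $\tilde L_t$ by $L_T^{\mathrm{avg}}$, and using $\E\|\vepsilon_0\| \le \sigma$, I reach $\tfrac1T\sum_t\E\|\nabla F(\vx_t)\| \lesssim \tfrac{\Delta_F + \Reg^{\mathrm{N}}_T}{\eta T} + \tfrac{\sigma}{\alpha T} + \sqrt\alpha\,\sigma + \tfrac{\eta L_T^{\mathrm{avg}}}{\alpha}$. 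Substituting the stated $\alpha$ makes $\sqrt\alpha\,\sigma$ equal to the first claimed term and $\tfrac{\sigma}{\alpha T}$ the second; balancing $\tfrac{\Delta_F+\Reg^{\mathrm{N}}_T}{\eta T}$ against $\tfrac{\eta L_T^{\mathrm{avg}}}{\alpha}$ over $\eta \in [0,\eta^{\max}]$ gives the first term at an interior optimum and the fourth term $\tfrac{\Delta_F + \Reg^{\mathrm{N}}_T}{\eta^{\max}T}$ when it saturates, while the clipped branch $\alpha = 1$ produces the third term $\tfrac{\sqrt{L_T^{\mathrm{avg}}(\Delta_F + \Reg^{\mathrm{N}}_T)}}{\sqrt T}$ and dominates the rest; together these are the four stated terms. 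I expect Step~4 to be the main obstacle: establishing the variance-reduced bound on $\sum_t\E\|\vepsilon_t\|$ with a \emph{local} estimate $\tilde L_t$ in place of the global constant, reconciling the fact that $\tilde L_t$ is built from stochastic gradients while the bias increments involve the true gradient $\nabla F$, and pinning down the exact constant $\tfrac{4(1-\alpha)}{3\alpha}$ in \eqref{eq:surrogate_loss_N} so that the bias cross term is absorbed cleanly.
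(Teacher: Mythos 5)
Your proposal is correct and follows the same overall skeleton as the paper's proof: a per-step bound that isolates the surrogate-loss regret (your Steps 1--3 play the role of the paper's Lemma~\ref{lem:one_step_improv}), an exponential-moving-average error bound (your Step 4 is exactly the paper's Lemma~\ref{lem:ema}), and a final absorption-plus-tuning argument over the comparator $\eta$ and the momentum parameter $\alpha$. Within this skeleton you make two local choices that genuinely differ from the paper. First, your alignment inequality $\langle \nabla F(\vx_t), \vm_t/\|\vm_t\|\rangle \ge \|\nabla F(\vx_t)\| - 2\|\vm_t - \nabla F(\vx_t)\|$ is valid and elementary: writing $\vepsilon_t = \vm_t - \nabla F(\vx_t)$, one has $\langle \nabla F(\vx_t), \vm_t\rangle = \|\vm_t\|^2 - \langle \vepsilon_t, \vm_t\rangle \ge \|\vm_t\|(\|\vm_t\| - \|\vepsilon_t\|)$, and the triangle inequality finishes. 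This is sharper than the paper's Lemma~\ref{lem:normalized_align}, which gives $\frac{1}{3}\|\nabla F(\vx_t)\| - \frac{8}{3}\|\vm_t - \nabla F(\vx_t)\|$ via a two-case argument; a consequence is that your endgame is streamlined (you divide by $\eta T$ directly), whereas the paper runs a Case I/Case II analysis comparing $\sum_t \E\|\nabla F(\vx_t)\|$ against $16(\sigma/\alpha + \sigma\sqrt{\alpha}T)$, and needs the coefficient $\frac{4(1-\alpha)}{3\alpha}$ in \eqref{eq:surrogate_loss_N} to exactly match its $\frac{8}{3}$ factor through Young's inequality, while your constants absorb the bias cross term with room to spare ($2\eta\eta_t - \frac{4}{3}\eta_t^2 \le \frac{3}{4}\eta^2$). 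Second, your ``learner identity plus reserves'' bookkeeping (keeping $-(\frac{L_t}{2} + \frac{4(1-\alpha)\tilde{L}_t}{3\alpha})\eta_t^2$ as explicit credits to spend on cross terms) is equivalent to, but tidier than, the paper's repeated applications of Young's inequality. The one point you must still close is the one you flagged yourself: the inequality $\|\nabla F(\vx_s) - \nabla F(\vx_{s+1})\| \le \tilde{L}_s \eta_s$ is \emph{false pointwise}, since $\tilde{L}_s$ is built from stochastic gradients while the left-hand side involves the true gradient. The fix, as in the paper's proof of Lemma~\ref{lem:ema}, is Jensen's inequality: $\|\nabla F(\vx_s) - \nabla F(\vx_{s+1})\| = \|\E_{\xi}[\nabla f(\vx_s;\xi) - \nabla f(\vx_{s+1};\xi)]\| \le \E_{\xi}\|\nabla f(\vx_s;\xi) - \nabla f(\vx_{s+1};\xi)\|$, applied conditionally on $\vx_s, \vx_{s+1}$ with a fresh sample, so the bound holds in expectation --- which is all your Steps 4--5 require.
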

The convergence rate of Theorem~\ref{thm:NGD} depends on the regret of the associated online learning problem. Hence, we propose to use an online learning algorithm to adaptively update the learning rate $\eta_t$ in \eqref{eq:NGD}. Note that if the local Lipschitz estimates $L_t$ and $\tilde{L}_t$ have uniform lower bounds, then the loss function in \eqref{eq:surrogate_loss_N} is strongly-convex and thus a logarithmic regret is possible.  For best theoretical guarantees, we use an optimistic variant of FTRL~\cite{rakhlin2013online,steinhardt2014adaptivity,mohri2016accelerating} given by: 
\begin{equation}\label{eq:OFTRL}
    \eta_{t+1} = \argmin_{\eta \in [0,\eta^{\max}]}\left\{\sum_{s=0}^t \ell_t(\eta) + \frac{\delta}{2}\eta^2 + h_{t+1}(\eta)\right\},
\end{equation}
where $h_{t+1}(\cdot)$ is a hint function that aims to approximate the next loss function $\ell_{t+1}$. Specifically, note that $\vm_{t+1}$ is already known at the time when we select $\eta_{t+1}$. Hence, we set $h_{t+1}(\eta) = -\eta \langle \vg_{t+1}(\vx_{t+1}), \frac{\vm_{t+1}}{\|\vm_{t+1}\|}  \rangle$, which yields the following closed-form update rule: 
\begin{equation*}
        \eta_{t+1} = \mathrm{clip}_{[0,{\eta^{\max}}]}\Bigl(\frac{\sum_{s=0}^{t}\langle \vg_s'(\vw_{s}), \vm_s/ \|\vm_s\| \rangle + \langle \vg_{t+1}(\vx_{t+1}),\vm_{t+1}/\|\vm_{t+1}\|\rangle}{\delta+ \sum_{s=0}^t (L_{s}+\frac{8(1-\alpha)}{3\alpha} \tilde{L}_s)}\Bigr). 
    \end{equation*}
We bound the regret of the above update rule in the following lemma (see Appendix~\ref{appen:regret} for the proof).
\begin{lemma}\label{lem:regret_bound}
    Let $\eta^{\max} = \sqrt{\alpha} \bar{\eta}$ for some given $\bar{\eta}$. Suppose that $\frac{\sum_{s=0}^t L_s}{t+1}  \geq M^{\mathrm{avg}}$   and $\max\{L_t, \tilde{L}_t\} \leq L^{\max}$ hold for any $t$ with probability one. Then we have
    \begin{equation*}
        \mathrm{Reg}_T^{\mathrm{N}} = {\bigO}\Bigl(\bar{\eta}^2 L^{\max}\log\Bigl(1+\frac{{L}^{\max}}{\alpha \delta}T\Bigr) + \frac{\sigma^2}{M^{\mathrm{avg}}} \log T \Bigr). 
    \end{equation*}
\end{lemma}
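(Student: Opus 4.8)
The plan is to treat update \eqref{eq:OFTRL} as optimistic FTRL on one-dimensional quadratic losses and reduce the regret to a curvature-weighted sum of squared hint-prediction errors. Writing $\ell_t^{\mathrm{N}}(\eta) = -a_t\eta + b_t\eta^2$ with $a_t = \langle \vg_t'(\vw_t), \vm_t/\|\vm_t\|\rangle$ and $b_t = \tfrac{L_t}{2} + \tfrac{4(1-\alpha)\tilde L_t}{3\alpha}$, each loss is $2b_t$-strongly convex, while the hint $h_t(\eta) = -\hat a_t\eta$ with $\hat a_t = \langle \vg_t(\vx_t), \vm_t/\|\vm_t\|\rangle$ is linear. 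I would first invoke the standard optimistic-FTRL regret bound (e.g.\ \cite{rakhlin2013online,mohri2016accelerating}): for any comparator $\eta\in[0,\eta^{\max}]$,
\[
\sum_{t}\bigl(\ell_t^{\mathrm{N}}(\eta_t)-\ell_t^{\mathrm{N}}(\eta)\bigr) \le \frac{\delta}{2}\eta^2 + \sum_t \frac{\bigl(\nabla\ell_t^{\mathrm{N}}(\eta_t)-\nabla h_t(\eta_t)\bigr)^2}{2\lambda_t},\qquad \lambda_t := \delta + \sum_{s=0}^{t} 2b_s,
\]
where $\lambda_t$ is the cumulative curvature of the regularized leader objective. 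Maximizing over $\eta$ (the right side is comparator-free apart from $\tfrac{\delta}{2}\eta^2\le\tfrac{\delta}{2}(\eta^{\max})^2$) leaves me to control the per-round quantity $\nabla\ell_t^{\mathrm{N}}(\eta_t)-\nabla h_t(\eta_t) = (\hat a_t-a_t) + 2b_t\eta_t$ and, since $\sigma^2$ enters, to do so in expectation.

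The heart of the argument is to split this per-round quantity into a noise part and a curvature part. For the prediction error I would use $\hat a_t - a_t = \langle \vg_t(\vx_t)-\vg_t'(\vw_t), \vm_t/\|\vm_t\|\rangle$ and decompose $\vg_t(\vx_t)-\vg_t'(\vw_t) = (\vg_t(\vx_t)-\vg_t'(\vx_t)) + (\vg_t'(\vx_t)-\vg_t'(\vw_t))$. The first difference is between two independent stochastic gradients at $\vx_t$; conditioning on $\xi_t$ (which also fixes $\vm_t$) and using the independent replicate $\xi_t'$ with Assumption~\ref{assum:variance} bounds its expected squared inner product by $\bigO(\sigma^2)$. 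The second difference equals $L_t\|\vw_t-\vx_t\| = L_t s_t\eta_t \le L^{\max}\eta_t$ by the definition of $L_t$ and because the normalized step gives $\|\vx_{t+1}-\vx_t\| = \eta_t$. Since $b_t\ge L_t/2$, this displacement term is dominated by $2b_t\eta_t$, so after $(x+y)^2\le 2x^2+2y^2$ I am left with a pure-noise contribution $\bigO(\sigma^2)$ and a curvature contribution $\bigO((2b_t\eta_t)^2)$.

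It then remains to sum both contributions against $1/(2\lambda_t)$. For the noise term I lower-bound the denominator via $2b_s\ge L_s$ and the hypothesis $\sum_{s=0}^t L_s \ge (t+1)M^{\mathrm{avg}}$, giving $\lambda_t \ge \delta + (t+1)M^{\mathrm{avg}}$ and hence $\sum_t \tfrac{\bigO(\sigma^2)}{\lambda_t} = \bigO\!\bigl(\tfrac{\sigma^2}{M^{\mathrm{avg}}}\log T\bigr)$. For the curvature term, set $c_t = 2b_t\le \tfrac{8L^{\max}}{3\alpha}=:c_{\max}$; then using $\eta_t\le\eta^{\max}=\sqrt\alpha\,\bar\eta$ and the elementary log-sum inequality $\sum_t \tfrac{c_t^2}{\delta+\sum_{s\le t}c_s}\le c_{\max}\log\!\bigl(1+\tfrac{1}{\delta}\sum_t c_t\bigr)$,
\[
\sum_t\frac{(2b_t\eta_t)^2}{2\lambda_t}\le \frac{(\eta^{\max})^2}{2}\,c_{\max}\log\!\Bigl(1+\tfrac{c_{\max}T}{\delta}\Bigr) = \bigO\!\Bigl(\bar\eta^2 L^{\max}\log\bigl(1+\tfrac{L^{\max}}{\alpha\delta}T\bigr)\Bigr),
\]
where the crucial cancellation $(\eta^{\max})^2 c_{\max} = \alpha\bar\eta^2\cdot\bigO(L^{\max}/\alpha) = \bigO(\bar\eta^2 L^{\max})$ removes the $\alpha$. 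Adding the two bounds and absorbing the $\bigO(\delta(\eta^{\max})^2)$ comparator term yields the claim.

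The main obstacle I anticipate is bookkeeping the $\alpha$-dependence so the two stated terms separate cleanly: the curvature coefficient $b_t$ blows up like $1/\alpha$ while $\eta^{\max}$ shrinks like $\sqrt\alpha$, and only the precise pairing $(\eta^{\max})^2 c_{\max}=\bigO(\bar\eta^2 L^{\max})$ inside the log-sum step keeps the first term $\alpha$-free. A secondary subtlety is measurability when passing to expectations: because $\vm_t$ is built from the same sample $\xi_t$ used in $\vg_t(\vx_t)$, the $\bigO(\sigma^2)$ variance bound must be obtained by conditioning on $\xi_t$ and exploiting the independent replicate $\xi_t'$, rather than treating $\vm_t/\|\vm_t\|$ as independent of the gradient noise.
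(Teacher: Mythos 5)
Your proposal is correct and follows essentially the same route as the paper's proof: the optimistic-FTRL regret bound for the $\delta$-regularized, strongly convex cumulative objective, the same decomposition of the hint error into an independent-sample noise part (bounded by $\bigO(\sigma^2)$ via Assumption~\ref{assum:variance}) and a curvature part $\bigO(b_t\eta_t)$, the log-sum lemma paired with the $(\eta^{\max})^2/\alpha = \bar{\eta}^2$ cancellation, and the $M^{\mathrm{avg}}$ lower bound on the denominator for the $\frac{\sigma^2}{M^{\mathrm{avg}}}\log T$ term. The only differences are cosmetic (e.g., your $c_{\max}=\frac{8L^{\max}}{3\alpha}$ should include the $L_t$ contribution, giving $\frac{11L^{\max}}{3\alpha}$ as in the paper, but this is immaterial in big-O).
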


Combining Theorem~\ref{thm:NGD} and Lemma~\ref{lem:regret_bound}, up to logarithmic factors, we have established that our method achieves a convergence rate of $\bigO(\frac{\sigma^{1/2}}{T^{1/4}}+ \frac{1}{\sqrt{T}})$, which matches the rate in \cite{cutkosky2020momentum} with a constant learning rate. Moreover, the convergence rate in Theorem~\ref{thm:NGD} is in terms of the average and maximum Lipschitz estimates, which can be much smaller than the global Lipschitz constant $L$ in \cite{cutkosky2020momentum}. Notably, we achieve this convergence rate by adaptively selecting the learning rate instead of using a predefined constant. Finally, we remark that our convergence results are comparable to those obtained for AdaGrad~\cite{faw2022power,attia2023sgd, liu2023high},  with the key distinction that our learning rate can both increase and decrease, while the AdaGrad rate is monotonically decreasing.

\section{Numerical experiments}\label{sec:numerical}

In this section, we present preliminary results of applying \method on the image classification task of training a residual network~\cite{he2016deep} on the CIFAR-10 dataset~\cite{krizhevsky2009learning}. Before discussing these results, we first describe the experimental setup, implementation details of algorithms and the practical considerations that improve the practical performance. 

\subsection{Experimental setup: training ResNet-18}

\paragraph{Model and dataset.} We use the torchvision implementation of ResNet18 model and train it on the CIFAR-10, CIFAR-100 and Flower102 datasets. During training, we augment all three datasets with random crops and horizontal flips; we additional use color jitter on the relatively more difficult CIFAR-100 and Flower102.

\paragraph{Optimizers.} We apply \method on two popular optimizers: SGD and Adam, which we denote as \sgdmethod and \adammethod, respectively, and compare them against standalone SGD and Adam. We also include two parameter-free algorithms in our comparison: Mechanic~\citep{cutkosky2023mechanic} and AdGD~\citep{malitsky2020adaptive}.

\paragraph{Additional hyperparameters.} Since our main focus is on adaptive learning rate selection, we set all other hyperparameters to their standard default values. Specifically, for \adammethod, we fix $\beta_1 = 0.9$, $\beta_2 = 0.999$, and $\delta = 10^{-8}$, consistent with the settings used for Adam. For SGD with momentum, we set the momentum parameter to $0.9$.

\paragraph{Setup.} Starting from the same initial model parameters, we run each method with initial learning rates from the list $[1, 10^{-1}, 10^{-2}, 10^{-3}, 10^{-4}, 10^{-5}, 10^{-8}]$ and fix all other parameters at prescribed values. Following the standards in the literature, we use a training batch size of 128 and set the weight decay to zero. All the experiments are run for 200 epochs. In order to quantify the error due to randomness, we ran the experiments with three different random seeds, which we report as error bars.

\paragraph{Hardware.} Our experiments were conducted on a cluster with NVIDIA A100 GPUs (96GB memory) and 120GB system RAM. %
The CIFAR-10 experiments with multiple random seeds required approximately 96 GPU hours, and both the CIFAR-100 and the Flower102~\citep{nilsback2008automated} experiments required approximately 192 GPU hours.

\subsection{Implementation details}

\paragraph{SGD with GALA.}  When applying \method to augment the standard SGD update rule, we introduce the following three key modifications to Algorithm~\ref{alg:GD_FTL}: 
\begin{enumerate} %
    \item Instead of sampling a random point $\vw_t$ from the segment~(cf. Line~\ref{line:sampling}), we set $\vw_t = \vx_{t+1}$. 
    \item To evaluate the alignment at time $t$, we compute both gradients using the same mini-batch $\xi_{t+1}$;  
    specifically, we use the inner product $\langle \nabla f(\vx_{t+1}; \xi_{t+1}), \nabla f(\vx_{t}; \xi_{t+1}) \rangle$ as the first term of our surrogate loss function defined in~\eqref{eq:surrogate_loss}. 
    \item We omit the clipping step in the learning rate update~\eqref{eq:eta_FTRL}. 
\end{enumerate}
Among these modifications, the first and third are mainly for simplicity. In contrast, the second plays a crucial role in the empirical performance of our method, as discussed in the Appendix. 
Incorporating these changes, the learning rate update rule using FTRL becomes:  
\begin{equation}\label{eq:GD_heurstic}
    L_t = \frac{\|\nabla f(\vx_{s+1}; \xi_{s+1}) - \nabla f(\vx_s; \xi_{s+1})\|}{\|\vx_{t+1} - \vx_t\|}, \quad \eta_{t+1} = \frac{\sum_{s=0}^t \langle \nabla f(\vx_{s+1}; \xi_{s+1}), \nabla f(\vx_s; \xi_{s+1}) \rangle}{\sum_{s=0}^t L_s\|\vg_s(\vx_s)\|^2}.
\end{equation}

\paragraph{Adam with GALA.} In addition to SGD, we also adapt our \method to Adam optimizer~\cite{kingma15adam}. Specifically, the standard Adam update rule is given by  
\begin{align*}
    \vm_{t} &= \beta_1 \vm_{t-1} + (1-\beta_1)\nabla f(\vx_t; \xi_t), \\
    \vv_t &= \beta_2 \vv_{t-1} + (1 - \beta_2) \nabla f(\vx_t; \xi_t)^2, \\
    \vd_t &= \frac{\vm_t}{\sqrt{\delta + \vv_t}},\quad \vx_{t+1} = \vx_t - \eta_t  \vd_t,
\end{align*}
where all operations are element-wise, and we omit bias correction terms for simplicity. To select the learning rate $\eta_t$ for Adam, we modify the surrogate loss function in \eqref{eq:surrogate_loss} as follows: 
(i) we replace the SGD direction $\nabla f(\vx_t; \xi_t)$ with the Adam update direction $\vd_t$; (ii) we substitute $\nabla f(\vw_t; \xi_t')$ with $\nabla f(\vx_t; \xi_t)$, so that the gradient alignment term involves the inner product of stochastic gradients computed on the same mini-batch $\xi_t$; (iii) we estimate $L_t$ using the same heuristic as in \eqref{eq:GD_heurstic}. These modifications lead to the following learning rate update rule: 
\begin{equation*}
    \eta_{t+1} = \frac{\sum_{s=0}^t \langle \vd_s, \nabla f(\vx_s; \xi_{s}) \rangle}{\sum_{s=0}^t L_s\|\vg_s(\vd_s)\|^2}.
\end{equation*}

\paragraph{Mechanic.} The Mechanic algorithm, proposed in \cite{cutkosky2023mechanic}, provides a general framework for adaptively selecting the learning rate of any base optimizer. At each iteration, it proceeds as follows: 
\begin{itemize}
    \item Sample $\xi_t \sim \mathcal{D}$ and compute the stochastic gradient $\vg_t = \nabla f(\vx_t; \xi_t)$; 
    \item Use $\vg_t$ to compute the update direction $\vu_t$ via the base optimizer and update the cumulative direction $\bm{\Delta}_{t+1} = \bm{\Delta}_t + \vu_t$; 
    \item  An internal online learner selects a learning rate $s_{t+1}$; 
    \item Update the iterate: $\vx_{t+1} = \vx_1 + s_{t+1} \bm{\Delta}_{t+1}$.
\end{itemize}
For example, if the base optimizer is SGD, then $\vu_t = - \eta \vg_t$. 
In our experiments, we apply Mechanic to both SGD and its momentum variant and vary the initial learning rate $\eta$, using the official implementation available at \url{https://github.com/optimizedlearning/mechanic}.  

\paragraph{AdGD.} The update rule of AdGD in~\cite{malitsky2020adaptive} {for the stochastic setting} is given by 
\begin{equation}\label{eq:adgd}
   \begin{aligned}
    \eta_t &= \min \left\{ \sqrt{ 1 + \alpha \frac{\eta_{t-1}}{\eta_{t-2}} } \eta_{t-1}, \frac{\| \vx_t - \vx_{t-1}  \|}{2 \| \nabla f(\vx_t; \xi_t) - \nabla f(\vx_{t-1}; \xi_t) \|} \right\},\\
    \vx_{t+1} &= \vx_t - \eta_t \nabla f(\vx_t; \xi_t),
\end{aligned} 
\end{equation}
where $\alpha = 1$ in the original algorithm, which is analyzed under deterministic gradients. In practice, the authors recommend using smaller values of $\alpha$ to improve stability and avoid spikes in the loss curve. For example, they report that for ResNet-18 on CIFAR-10, setting $\alpha = 0.02$ yields the best performance. Following their recommendation, we use this value in all of our experiments.

\subsection{Experiments}
We particularly focus on the notion of robustness and study the performance of algorithms from a wide range of initial learning rate. Specifically, we report the final training loss (\cref{fig:app-trainloss}), training accuracy (\cref{fig:app-trainacc}), and testing accuracy (\cref{fig:app-testacc}) with respect to different base learning rates for each algorithm we test. The error bars show the standard deviation over three runs.

As shown in Figure~\ref{fig:sgd_vs_gala}, \sgdmethod exhibits robustness across a wide range of initial learning rates, with all configurations following similar convergence trajectories. In contrast, the performance of standard SGD is highly sensitive to the learning rates. Especially, overly small initial learning rates result in significantly slower convergence. A similar pattern is observed in Figure~\ref{fig:adam_vs_gala} when comparing \adammethod to Adam. While \adammethod maintains stable performance across most learning rates---except when initialized with a relatively large value such as 1---Adam displays greater variability and may become unstable when the learning rate is too large.
\begin{figure}[t!]
    \begin{subfigure}{0.33\linewidth}
    \includegraphics[width=0.98\linewidth]{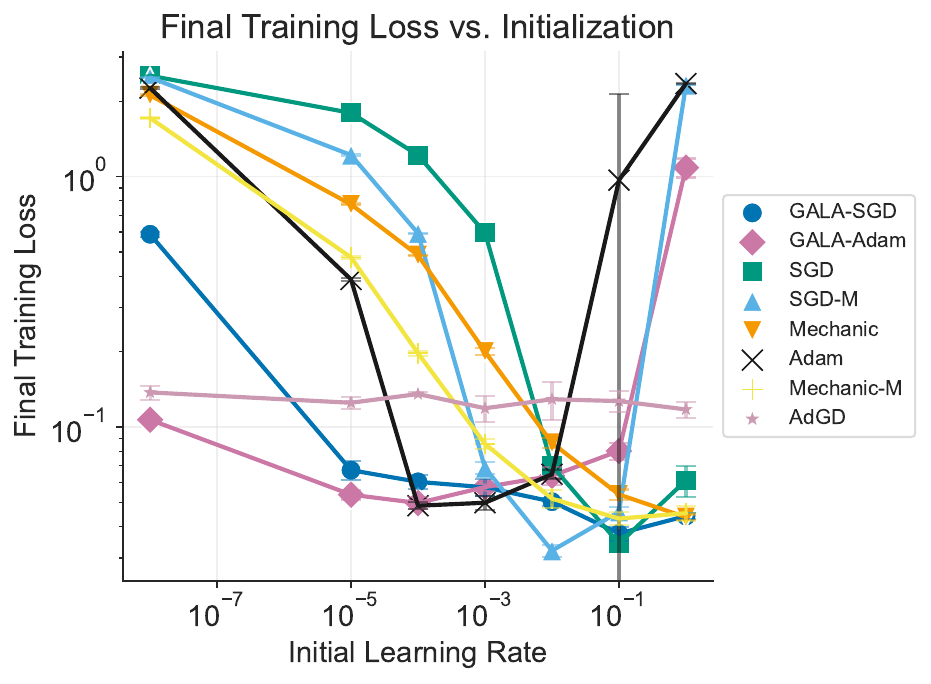}
    \caption{CIFAR-10}
    \label{fig:app-cifar10-trainloss}
    \end{subfigure}
    \begin{subfigure}{0.33\linewidth}
    \includegraphics[width=0.98\linewidth]{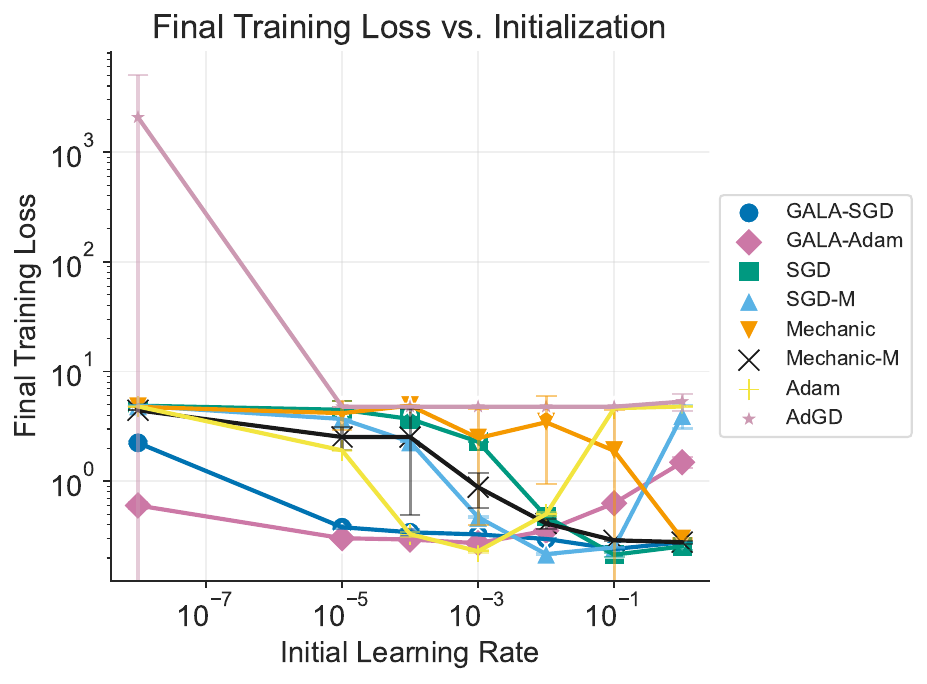}
    \caption{CIFAR-100}
    \label{fig:app-cifar100-trainloss}
    \end{subfigure}
    \begin{subfigure}{0.33\linewidth}
    \includegraphics[width=0.98\linewidth]{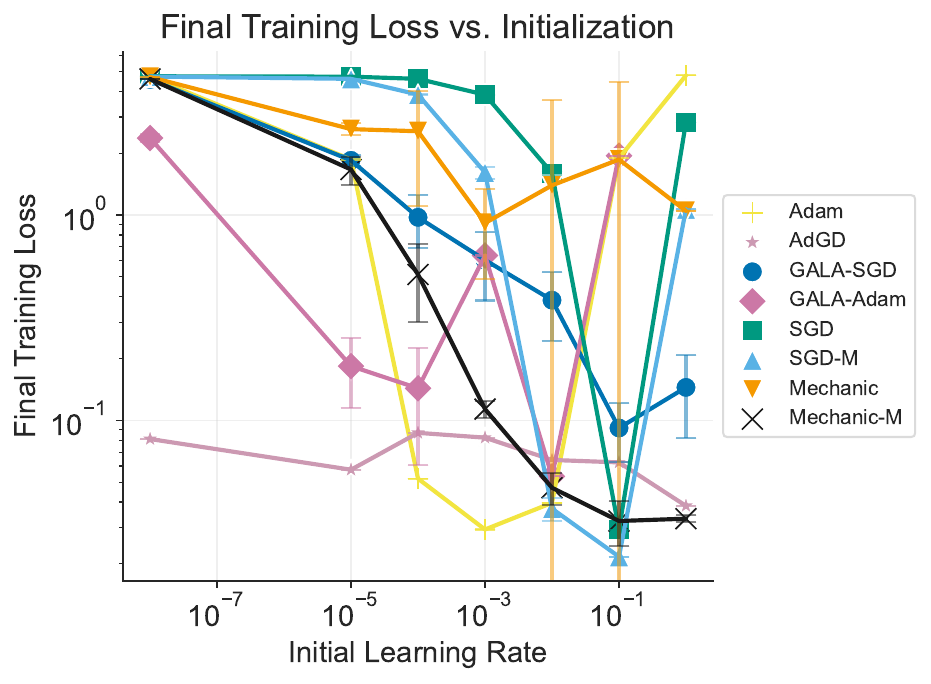}
    \caption{Flowers102}
    \label{fig:app-flowers102-trainloss}
    \end{subfigure}
    \caption{Comparison of \textbf{final training loss} values obtained from different initial learning rates for the CIFAR-10, CIFAR-100, and Flower102 datasets. We compare the performance of \sgdmethod, \adammethod against SGD, Adam, AdGD, and Mechanic. We initialize each algorithm with learning rates $[1, 10^{-1}, 10^{-2}, 10^{-3}, 10^{-4}, 10^{-5}, 10^{-8}]$ and execute 3 seeded runs for each.}
    \label{fig:app-trainloss}
\end{figure}
\begin{figure}[h]
    \begin{subfigure}{0.33\linewidth}
    \includegraphics[width=0.98\linewidth]{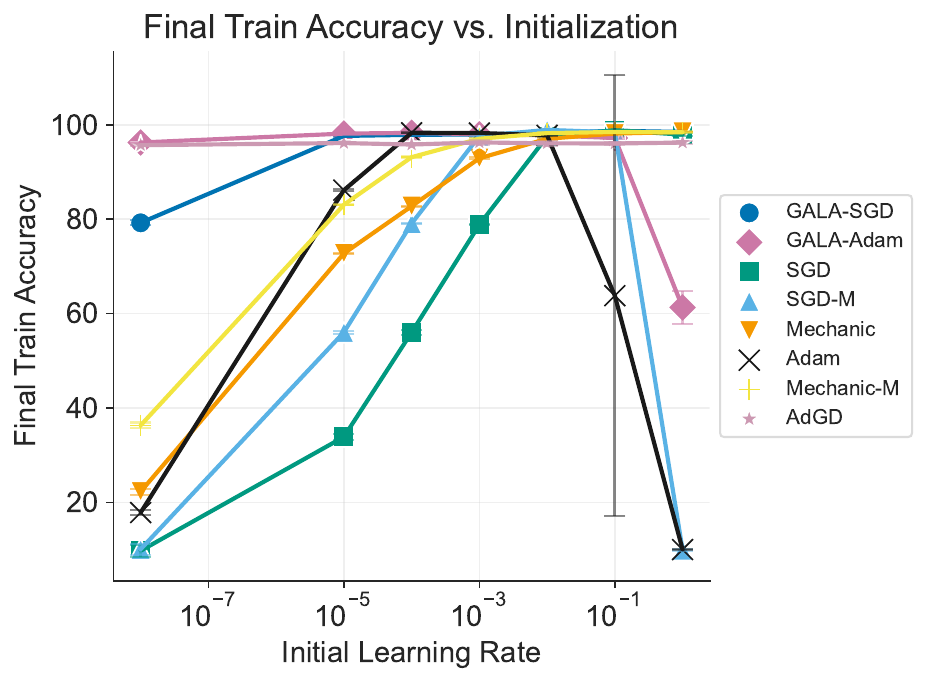}
    \caption{CIFAR-10}
    \label{fig:app-cifar10-trainacc}
    \end{subfigure}
    \begin{subfigure}{0.33\linewidth}
    \includegraphics[width=0.98\linewidth]{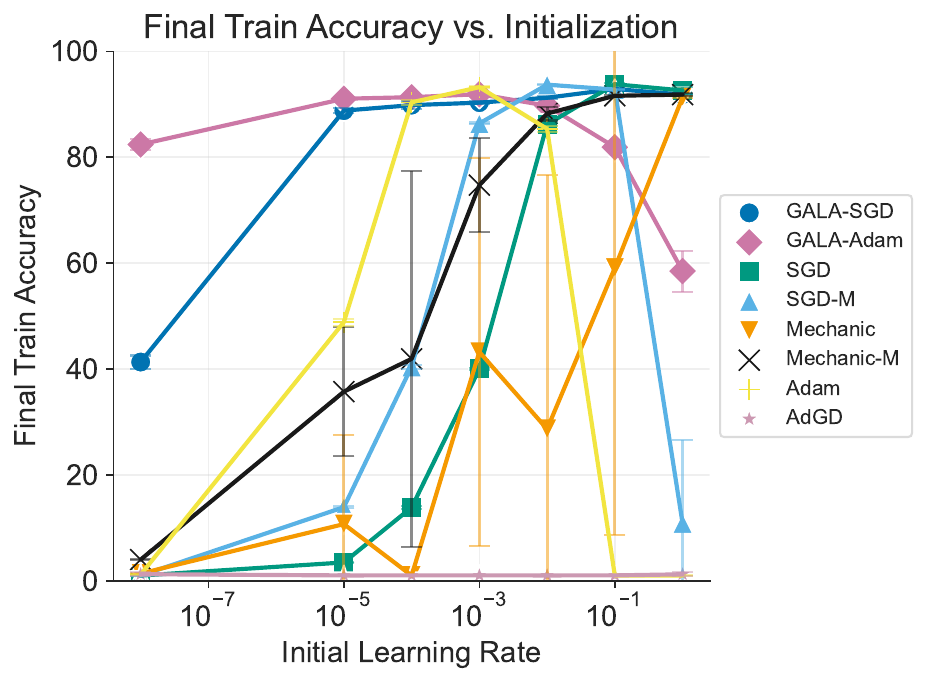}
    \caption{CIFAR-100}
    \label{fig:app-cifar100-trainacc}
    \end{subfigure}
    \begin{subfigure}{0.33\linewidth}
    \includegraphics[width=0.98\linewidth]{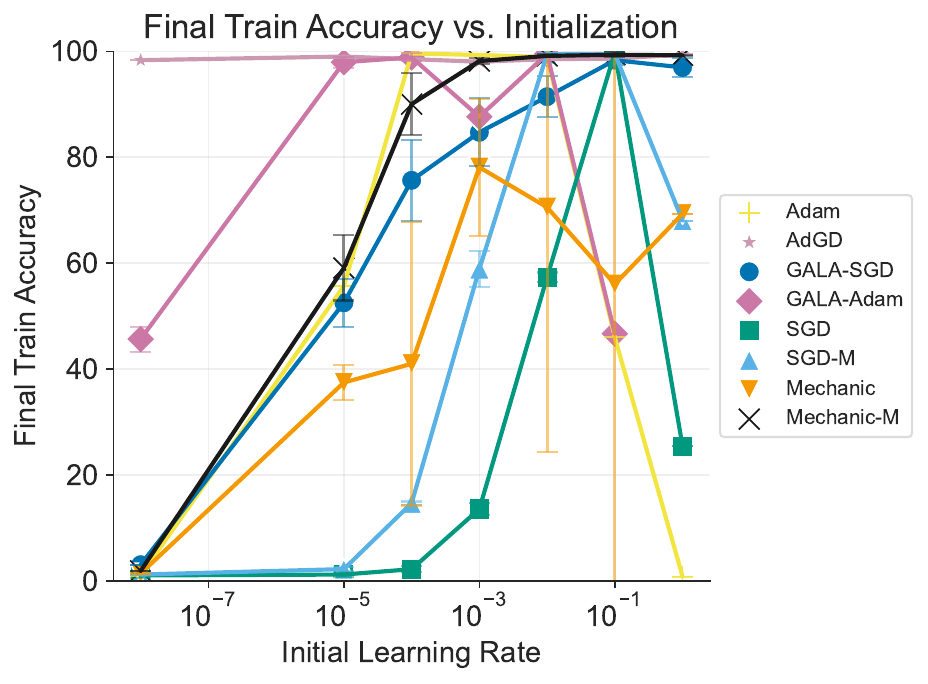}
    \caption{Flowers102}
    \label{fig:app-flowers102-trainacc}
    \end{subfigure}
    \caption{Comparison of \textbf{final training accuracy} obtained from different initial learning rates for  the CIFAR-10, CIFAR-100, and Flower102 datasets. We compare the performance of \sgdmethod, \adammethod against SGD, Adam, AdGD, and Mechanic. We initialize each algorithm with learning rates $[1, 10^{-1}, 10^{-2}, 10^{-3}, 10^{-4}, 10^{-5}, 10^{-8}]$ and execute 3 seeded runs for each.}
    \label{fig:app-trainacc}
\end{figure}
\begin{figure}[h]
    \begin{subfigure}{0.33\linewidth}
    \includegraphics[width=0.98\linewidth]{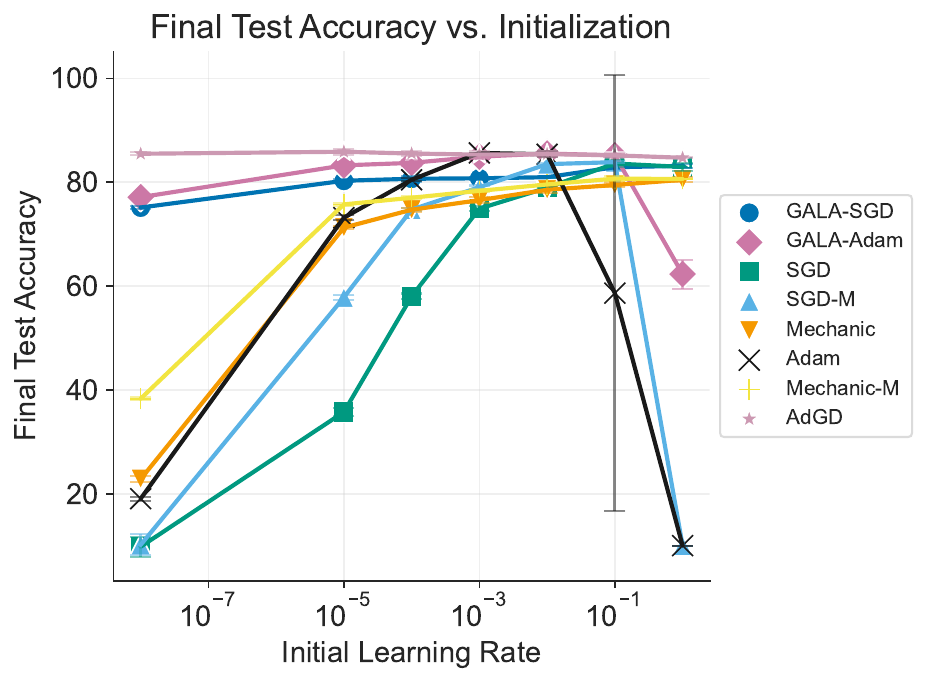}
    \caption{CIFAR-10}
    \label{fig:app-cifar10-testacc}
    \end{subfigure}
    \begin{subfigure}{0.33\linewidth}
    \includegraphics[width=0.98\linewidth]{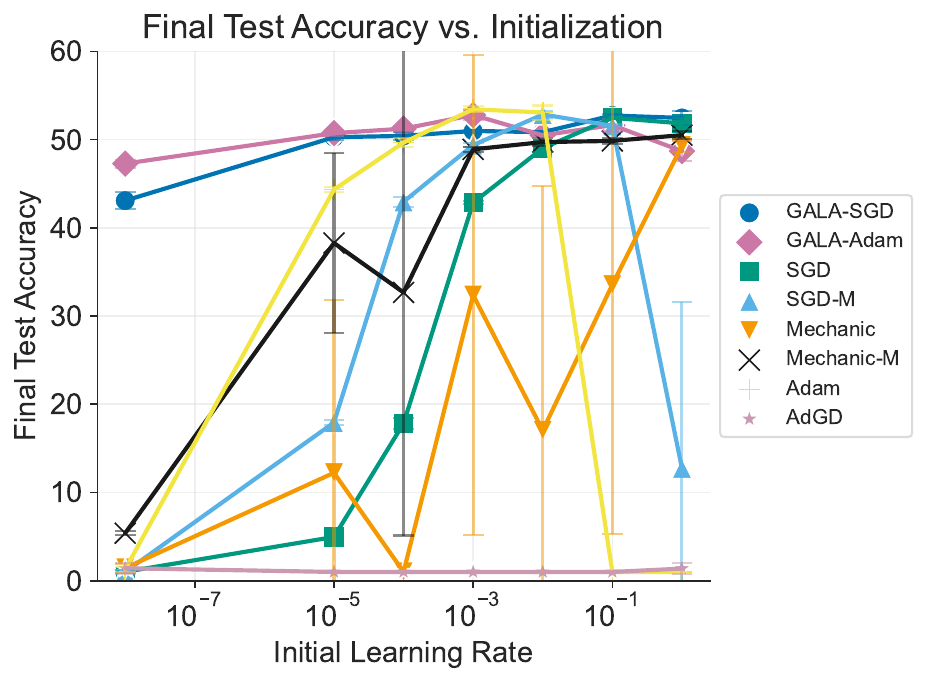}
    \caption{CIFAR-100}
    \label{fig:app-cifar100-testacc}
    \end{subfigure}
    \begin{subfigure}{0.33\linewidth}
    \includegraphics[width=0.98\linewidth]{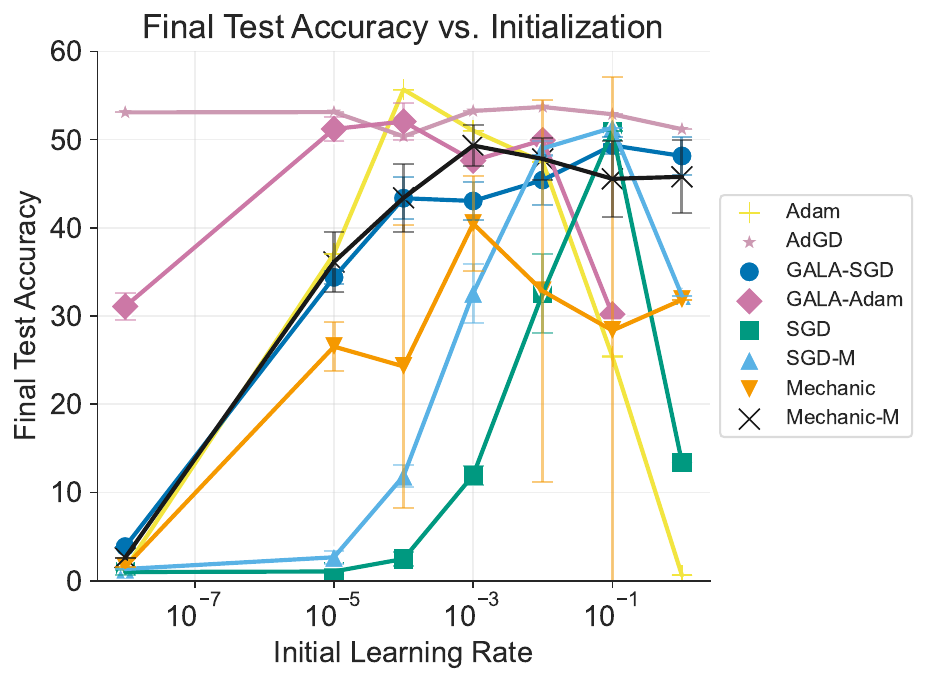}
    \caption{Flowers102}
    \label{fig:app-flowers102-testacc}
    \end{subfigure}
    \caption{Comparison of \textbf{final test accuracy} values obtained from different initial learning rates for the CIFAR-10, CIFAR-100, and Flower102 datasets. We compare the performance of \sgdmethod, \adammethod against SGD, Adam, AdGD, and Mechanic. We initialize each algorithm with learning rates $[1, 10^{-1}, 10^{-2}, 10^{-3}, 10^{-4}, 10^{-5}, 10^{-8}]$ and execute 3 seeded runs for each.}
    \label{fig:app-testacc}
\end{figure}

Across the three different datasets, we observe that our methods, \sgdmethod and \adammethod, remain robust with respect to the initial learning rates and has negligible variance due to random seeds. Moreover, they are competitive to the best-performing method in all the experiments. Compared to Mechanic and AdGD, the variance for different seeds is smaller for \sgdmethod and \adammethod. Among the GALA-variants, \sgdmethod performs better than \adammethod for larger learning rates. On the other hand, SGD, SGD with momentum, and Adam are particularly sensitive to the choice of the initial learning rate.

The results on all three datasets (although more pronounced for CIFAR-10 and CIFAR-100) show that Mechanic tends to perform better with larger learning rates, but there is higher variance with different random seeds. 
Interestingly, AdGD fails on CIFAR-100 dataset; as we will discuss later in more detail over the learning rate evolution of the method, this is likely due to the fact that its learning rates become too large in some scenarios. In fact its practical implementation requires an additional mechanism to limit the growth of the learning rate, introducing an extra hyperparameter. To foster stability, a particular choice is recommended for the hyperparameter~\citep{malitsky2020adaptive}. 

Overall, we empirically validate that \method extends the operating window of its base optimizer; it consistently improves stability for very small and relatively large initial learning rates.

\begin{figure}[ht!]
    \begin{subfigure}{0.33\linewidth}
    \includegraphics[width=0.98\linewidth]{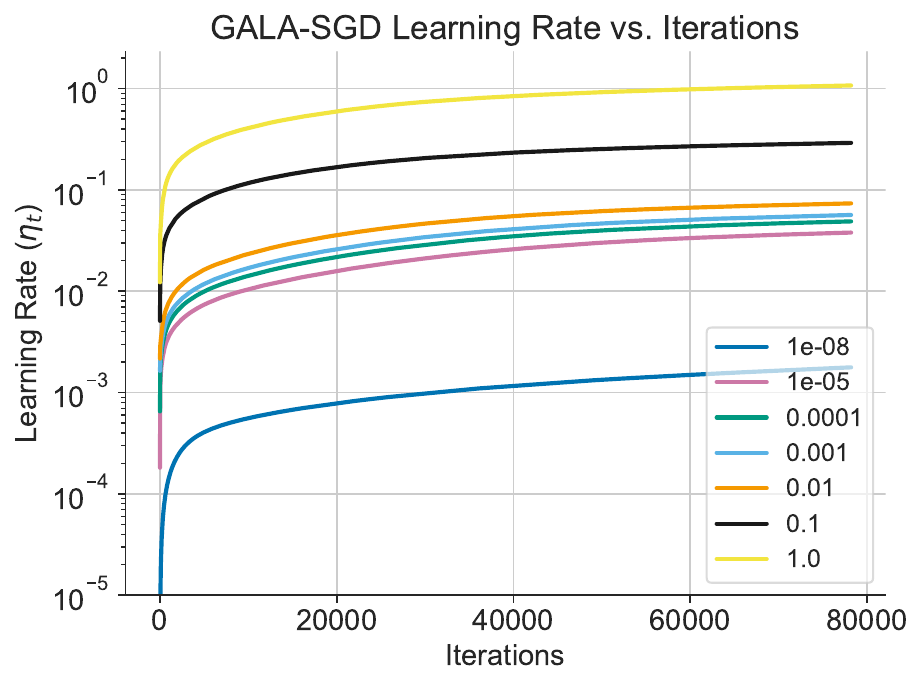}
    \caption{\sgdmethod}
    \label{fig:app-galasgd-lr}
    \end{subfigure}
    \begin{subfigure}{0.33\linewidth}
    \includegraphics[width=0.98\linewidth]{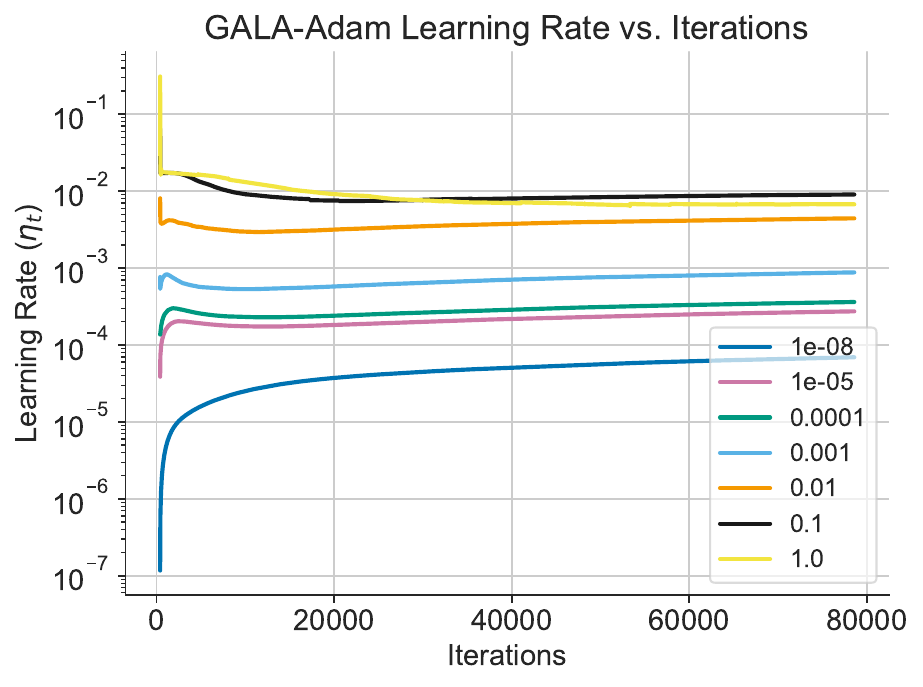}
    \caption{\adammethod}
    \label{fig:app-galaadam-lr}
    \end{subfigure}
    \begin{subfigure}{0.33\linewidth}
    \includegraphics[width=0.98\linewidth]{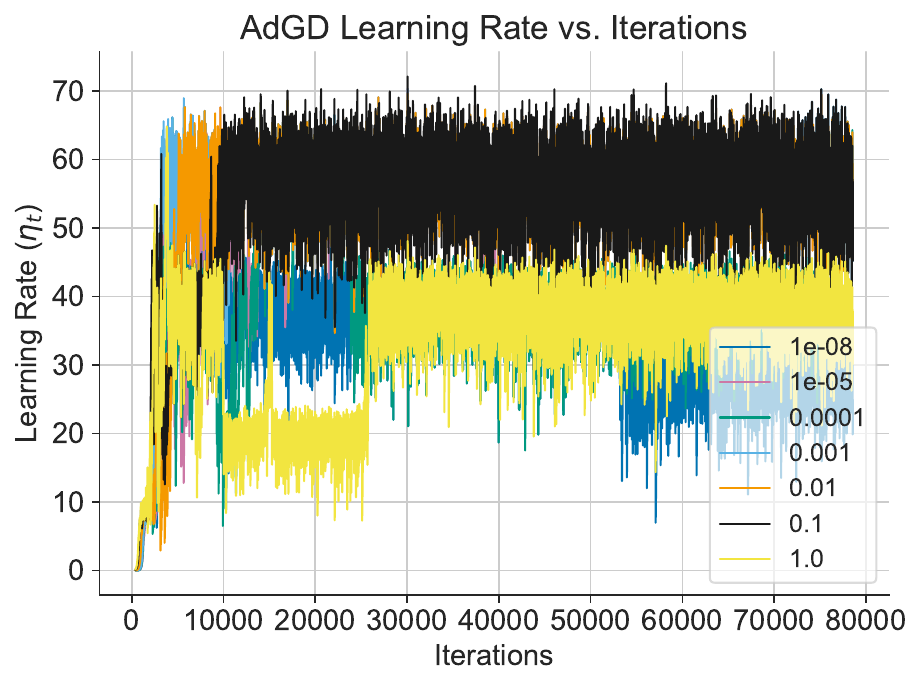}
    \caption{AdGD}
    \label{fig:app-adgd-lr}
    \end{subfigure}
    \caption{Comparison of \textbf{learning rate evolution} for \sgdmethod, \adammethod and AdGD on the CIFAR-100 dataset, averaged over 3 runs.}
    \label{fig:app-lr}
    \vspace{-1em}
\end{figure}
\paragraph{Learning rate evolution}
To better understand the convergence behavior of our method, we visualize the learning rate dynamics during optimization
in \cref{fig:app-lr} on the CIFAR-100 dataset. As shown in \cref{fig:app-galasgd-lr}, the learning rate of \sgdmethod evolves similarly and converges to similar values across a wide range of initialization, excluding extreme cases such as $\eta = 1$, $0.1$, or $10^{-8}$. This convergence likely explains the robustness of \sgdmethod to the choice of initial learning rate. Also, in \cref{fig:app-galaadam-lr}, we observe that, depending on the initial value, the learning rate of \adammethod adapts over time and can both increase and decrease. In most cases, the learning rate stabilizes between $10^{-2}$ and $10^{-3}$, which roughly corresponds to the best fixed learning rate for Adam according to \cref{fig:app-cifar100-testacc}. By contrast, \cref{fig:app-adgd-lr} shows that the learning rate chosen by AdGD tends to oscillate and frequently becomes excessively large, which may contribute to its degraded performance. While AdGD performs competitively on CIFAR-10, its behavior on CIFAR-100 suggests that it may be less robust and that the hyperparameter $\alpha$ in \eqref{eq:adgd} may require retuning for stable performance on new datasets.

\section{Conclusion}\label{sec:conclusion}

In this paper, we proposed a principled framework, \method, that dynamically adjusts the learning rate based on gradient alignment and a local curvature estimate. Motivated by convergence analysis, we formulated learning rate selection as a one-dimensional online learning problem and solve it using an online learning algorithm. We established convergence guarantees for normalized SGD equipped with \method and conduct preliminary experiments demonstrating that, when combined with SGD or Adam, our method yields robust performance across a wide range of initial learning rates.
One potential limitation of our work is that the convergence analysis is established for one instantiation of \method and our experiments focus on its integration with SGD and Adam. An interesting future venue is to extend our framework to a broader class of optimizers.

\section*{Acknowledgments}
This work is supported in part by NSF Grant CCF-2007668, the NSF AI Institute for Foundations of Machine Learning (IFML), and the Wireless Networking and Communications Group (WNCG) Industrial Affiliates Program at UT Austin.
Research of Ali Kavis is funded in part by the Swiss National Science Foundation (SNSF) under grant number P500PT\_217942. We are grateful for computing support on the Vista GPU Cluster through the Center for Generative AI (CGAI) and the Texas Advanced Computing Center (TACC) at the University of Texas at Austin.

\printbibliography

\newpage
\appendix

\section*{Appendix}

\section{Proof of Lemma~\ref{lem:regret}}\label{appen:regret_SGD}
Recall from~\eqref{eq:integral} that $F(\vx_{t+1}) - F(\vx_t) = -\eta_t\E_{\lambda_t,\xi_t'}[\langle \vg_t'(\vw_t), \vg_t(\vx_t)\rangle]$. 
We now decompose the right-hand side as $\langle \vg_t'(\vw_t), \vg_t(\vx_t)\rangle = \langle \vg_t'(\vx_t), \vg_t(\vx_t)\rangle + \langle \vg_t'(\vw_t) - \vg_t'(\vx_t) , \vg_t(\vx_t) \rangle$, where $\vg_t'(\vx_t) = \nabla f(\vx_t; \xi_t)$. For the first term, since $\xi_t$ and $\xi'$ are independent samples from the distribution $\mathcal{D}$, it holds that $\E[ \langle \vg_t'(\vx_t), \vg_t(\vx_t)\rangle] = \E[\|\nabla F(\vx_t)\|^2]$. Moreover, for the second term, it follows from Cauchy-Schwarz inequality and the definition of $L_t$ that
\begin{equation*}
    \langle\vg_t'(\vw_t) - \vg_t'(\vx_t) , \vg_t(\vx_t) \rangle \geq -\|\vg_t'(\vw_t) - \vg_t'(\vx_t)\| \|\vg_t(\vx_t)\| = -L_t \|\vw_t - \vx_t\| \|\vg_t(\vx_t)\|. 
\end{equation*}  
Since $\vw_t = \vx_{t} + \lambda_t(\vx_{t+1}-\vx_t)$ and $\lambda_t \in [0,1]$, we further have $\|\vw_t-\vx_t\| = \lambda_t \|\vx_{t+1} - \vx_t\| \leq \|\vx_{t+1} - \vx_t\| = \eta_t \|\vg_t(\vx_t)\|$, which leads to $\langle\vg_t'(\vw_t) - \vg_t'(\vx_t) , \vg_t(\vx_t) \rangle \geq -L_t \eta_t \|\vg_t(\vx_t)\|^2$. By combining both results, we obtain that 
\begin{equation*}
    \E[\langle \vg_t'(\vw_t), \vg_t(\vx_t)\rangle]  \geq \E[\|\nabla F(\vx_t)\|^2] - \E[L_t \eta_t \|\vg_t(\vx_t)\|^2].
\end{equation*} 
Hence, taking expectations on both sides of \eqref{eq:integral}, we can write  
\begin{align}
    \E[F(\vx_{t+1}) - F(\vx_t)] &= -\E[\eta_t\langle \vg_t'(\vw_t), \vg_t(\vx_t)\rangle] \nonumber\\
    &= -\E[(\eta_t - \eta)\langle \vg_t'(\vw_t), \vg_t(\vx_t)\rangle] \!-\! \eta\E[\langle \vg_t'(\vw_t), \vg_t(\vx_t)\rangle] \nonumber\\
    &\leq \!-\E[(\eta_t - \eta)\langle \vg_t'(\vw_t), \vg_t(\vx_t)\rangle] \!-\! \eta\E[\|\nabla F(\vx_t)\|^2] \!+\! \eta \E[L_t \eta_t \|\vg_t(\vx_t)\|^2]. \label{eq:before_young}
\end{align}
Moreover, from Young's inequality $\eta \eta_t \leq \frac{\eta^2}{2} + \frac{\eta_t^2}{2}$, the last term in \eqref{eq:before_young} can be bounded by $L_t \eta \eta_t \|\vg_t(\vx_t)\|^2 \leq \frac{L_t \vg_t(\vx_t)\|^2 \eta^2}{2} + \frac{L_t \vg_t(\vx_t)\|^2 \eta_t^2}{2}$. Thus, we obtain 
\begin{align}
    \E[F(\vx_{t+1}) - F(\vx_t)] &\leq - \eta\E[\|\nabla F(\vx_t)\|^2]  -\E[(\eta_t - \eta)\langle \vg_t'(\vw_t), \vg_t(\vx_t)\rangle] + \E[L_t  \|\vg_t(\vx_t)\|^2 \eta^2] \nonumber\\
    &\phantom{{}\leq{}} + \frac{L_t \vg_t(\vx_t)\|^2 \eta_t^2}{2} - \frac{L_t \vg_t(\vx_t)\|^2 \eta^2}{2} \nonumber\\
    & =  - \eta\E[\|\nabla F(\vx_t)\|^2] + \eta^2\E[L_t  \|\vg_t(\vx_t)\|^2 ] + \E[\ell_t(\eta_t) - \ell_t(\eta)], \label{eq:one_step_bound}
\end{align}
where in the last equality we used the definition of the surrogate loss function in \eqref{eq:surrogate_loss}.  
Moreover, Since $L_t \leq L^{\max}$ for any $t\geq 0$ with probability one, we have $\E[L_t \|\vg_t(\vx_t)\|^2]\leq L^{\max}\E[ \|\vg_t(\vx_t)\|^2] \leq L^{\max} (\E[\|\nabla F(\vx_t)\|^2] + \sigma^2)$. Plugging this bound in \eqref{eq:one_step_bound} and rearranging, we obtain 
\begin{equation*}
    \E[(\eta - \eta^2 L^{\max})\|\nabla F(\vx_t)\|^2] \leq \E[F(\vx_t) - F(\vx_{t+1})] + L^{\max} \eta^2\sigma^2 + \E[\ell_t(\eta_t) - \ell_t(\eta)]. 
\end{equation*}
Summing the above inequality from $t=0$ to $t=T-1$ yields~\eqref{eq:sum_grad_squares}. This completes the proof.

\section{Proof of Theorem~\ref{thm:NGD}}\label{appen:NGD}

We divide the proof of Theorem~\ref{thm:NGD} into the following three steps.

\textbf{Step 1:} Following similar arguments as in the proof of Lemma~\ref{lem:regret}, we first bound the function value decrease after one iteration. Its proof can be found in \cref{appen:one_step_improv}.   
\begin{lemma}\label{lem:one_step_improv}
    For any $\eta >0$, we have  $\E [F(\vx_{t+1}) - F(\vx_t)] \leq  \E\Bigl[- \frac{\eta}{3}\|\nabla F(\vx_t)\| + L_t \eta^2 + \frac{8 \eta}{3}\|\vm_t - \nabla F(\vx_t)\|\Bigr] + \E\Bigl[- (\eta_t -\eta) \Bigl\langle \vg'(\vw_t), \frac{\vm_t}{\|\vm_t\|}\Bigr\rangle + \frac{L_t}{2}(\eta_t^2-\eta^2)\Bigr] %
    $. 
\end{lemma}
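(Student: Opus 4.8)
The plan is to mirror the proof of Lemma~\ref{lem:regret}, but adapted to the normalized update \eqref{eq:NGD}, taking care to route the momentum error into a term that the later steps will control. First I would invoke the same fundamental-theorem-of-calculus identity used to derive \eqref{eq:integral}: since $\vx_{t+1} - \vx_t = -\eta_t\, \vm_t/\|\vm_t\|$, we have $F(\vx_{t+1}) - F(\vx_t) = -\eta_t \langle \bm{\nabla}_t, \vm_t/\|\vm_t\|\rangle$, and because $\vg_t'(\vw_t)$ is an unbiased estimate of $\bm{\nabla}_t$ (over $\lambda_t$ and $\xi_t'$), taking expectations gives $\E[F(\vx_{t+1}) - F(\vx_t)] = -\E[\eta_t \langle \vg_t'(\vw_t), \vm_t/\|\vm_t\|\rangle]$. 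I would then introduce the comparator $\eta$ via $-\eta_t\langle\cdot\rangle = -(\eta_t - \eta)\langle\cdot\rangle - \eta\langle\cdot\rangle$; the first piece is exactly the linear part of the surrogate regret and is kept intact for the second bracket of the claim.

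For the term $-\eta\langle \vg_t'(\vw_t), \vm_t/\|\vm_t\|\rangle$, I would split $\vg_t'(\vw_t) = \vg_t'(\vx_t) + (\vg_t'(\vw_t) - \vg_t'(\vx_t))$. The discrepancy part is controlled by Cauchy--Schwarz and the definition of $L_t$: $-\eta\langle \vg_t'(\vw_t) - \vg_t'(\vx_t), \vm_t/\|\vm_t\|\rangle \le \eta \|\vg_t'(\vw_t) - \vg_t'(\vx_t)\| = \eta L_t \|\vw_t - \vx_t\|$, and since normalization forces $\|\vx_{t+1}-\vx_t\| = \eta_t$ and $\lambda_t \in [0,1]$ we get $\|\vw_t - \vx_t\| \le \eta_t$ pointwise, hence a bound of $\eta L_t \eta_t$. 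Applying Young's inequality $\eta\eta_t \le \tfrac12(\eta^2 + \eta_t^2)$ and regrouping yields exactly $L_t\eta^2 + \tfrac{L_t}{2}(\eta_t^2 - \eta^2)$, which is precisely how the $L_t\eta^2$ term (first bracket) and the quadratic regret term $\tfrac{L_t}{2}(\eta_t^2-\eta^2)$ (second bracket) arise.

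The remaining piece is $-\eta\langle \vg_t'(\vx_t), \vm_t/\|\vm_t\|\rangle$. Here I would take the conditional expectation over the fresh sample $\xi_t'$, which is independent of $\vm_t$ and $\vx_t$, so that $\E[\vg_t'(\vx_t)\mid \mathcal{F}_t] = \nabla F(\vx_t)$ while $\vm_t/\|\vm_t\|$ is measurable; this replaces $\vg_t'(\vx_t)$ by $\nabla F(\vx_t)$. Descent is then extracted through the normalized-direction inequality: writing $\vm_t = \nabla F(\vx_t) + (\vm_t - \nabla F(\vx_t))$, the reverse triangle inequality together with Cauchy--Schwarz gives $\langle \nabla F(\vx_t), \vm_t/\|\vm_t\|\rangle \ge \|\nabla F(\vx_t)\| - 2\|\vm_t - \nabla F(\vx_t)\|$, which I would relax to $\tfrac13\|\nabla F(\vx_t)\| - \tfrac83\|\vm_t - \nabla F(\vx_t)\|$ (valid since $\tfrac23\|\nabla F(\vx_t)\| + \tfrac23\|\vm_t - \nabla F(\vx_t)\| \ge 0$) so that the constants match those needed by the momentum recursion in the subsequent steps. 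This produces $-\tfrac{\eta}{3}\|\nabla F(\vx_t)\| + \tfrac{8\eta}{3}\|\vm_t - \nabla F(\vx_t)\|$, and collecting all terms gives the stated inequality.

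The computation is mostly routine; the main thing to get right is the bookkeeping that decides what goes into each of the two brackets --- specifically, keeping $-(\eta_t-\eta)\langle \vg_t'(\vw_t), \vm_t/\|\vm_t\|\rangle$ together with $\tfrac{L_t}{2}(\eta_t^2-\eta^2)$ as the (future) regret term, while isolating the momentum-error term $\|\vm_t - \nabla F(\vx_t)\|$ so that it can later be absorbed by the $\tilde L_t$-dependent regularizer in \eqref{eq:surrogate_loss_N} during the momentum-error recursion. A secondary point requiring care is the conditional-expectation argument: one must verify that $\eta_t$ (chosen online, before observing $\xi_t'$ and $\lambda_t$) and $\vm_t$ are measurable with respect to the conditioning $\sigma$-algebra $\mathcal{F}_t$, so that the unbiasedness $\E[\vg_t'(\vx_t)\mid\mathcal{F}_t] = \nabla F(\vx_t)$ can be applied cleanly.
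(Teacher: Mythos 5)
Your proposal is correct, and its skeleton coincides with the paper's proof step for step: the fundamental-theorem-of-calculus identity with the normalized direction, the comparator split $-\eta_t\langle\cdot\rangle = -(\eta_t-\eta)\langle\cdot\rangle - \eta\langle\cdot\rangle$, the decomposition $\vg_t'(\vw_t) = \vg_t'(\vx_t) + (\vg_t'(\vw_t)-\vg_t'(\vx_t))$ handled by Cauchy--Schwarz together with $\|\vw_t-\vx_t\|\le \|\vx_{t+1}-\vx_t\| = \eta_t$, the conditional-expectation replacement of $\vg_t'(\vx_t)$ by $\nabla F(\vx_t)$, and the final application of Young's inequality $L_t\eta\eta_t \le L_t\eta^2 + \tfrac{L_t}{2}(\eta_t^2-\eta^2)$ all appear in the same roles. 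The one place where you genuinely depart from the paper is the normalized-alignment inequality. The paper isolates this as \cref{lem:normalized_align} and proves $\bigl\langle \nabla F(\vx_t), \tfrac{\vm_t}{\|\vm_t\|}\bigr\rangle \ge \tfrac13\|\nabla F(\vx_t)\| - \tfrac83\|\vm_t-\nabla F(\vx_t)\|$ by a two-case analysis on whether $\|\vm_t-\nabla F(\vx_t)\|\le\tfrac12\|\nabla F(\vx_t)\|$, following \cite[Lemma 2]{cutkosky2020momentum}. You instead assert the stronger bound $\bigl\langle \nabla F(\vx_t), \tfrac{\vm_t}{\|\vm_t\|}\bigr\rangle \ge \|\nabla F(\vx_t)\| - 2\|\vm_t-\nabla F(\vx_t)\|$, which is indeed valid and provable in one line exactly as you indicate: writing $\ve_t = \vm_t - \nabla F(\vx_t)$, one has $\bigl\langle \nabla F(\vx_t), \tfrac{\vm_t}{\|\vm_t\|}\bigr\rangle = \|\vm_t\| - \bigl\langle \ve_t, \tfrac{\vm_t}{\|\vm_t\|}\bigr\rangle \ge \|\vm_t\| - \|\ve_t\| \ge \|\nabla F(\vx_t)\| - 2\|\ve_t\|$ by Cauchy--Schwarz and the (reverse) triangle inequality; your subsequent relaxation to the constants $(\tfrac13,\tfrac83)$ is legitimate because the difference of the two lower bounds is $\tfrac23\|\nabla F(\vx_t)\| + \tfrac23\|\ve_t\|\ge 0$. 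Your route buys a cleaner, case-free argument and a strictly sharper intermediate inequality --- carried through without the relaxation it would improve the constants in the lemma and downstream in \cref{thm:NGD} --- while the paper's case analysis only yields the weaker pair of constants; since only the order of magnitude matters for the final rate, both approaches are equally serviceable, but yours is arguably the more economical proof of the key sub-lemma.
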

In the above bound, the first bracketed term shows up in the analysis of normalized SGD with momentum in~\cite{cutkosky2020momentum}; it is the upper bound we get when choosing $\eta_t = \eta$. 
Moreover, the second term in the bracket captures the difference between the actual learning rate $\eta_t$ and the comparator $\eta$. It will be incorporated into the surrogate loss function and be bounded by the regret. 
\textbf{Step 2:} Next, we controls the approximation error $\E[\|\vm_t - \nabla F(\vx_t)\|]$ incurred by exponential moving averaging.
\begin{lemma}~\label{lem:ema}
    Define $\tilde{L}_t = \frac{\|g_t'(\vx_{t+1}) - g_t'(\vx_{t})\|}{\|\vx_{t+1} - \vx_t\|}$. Then we have $\sum_{t=0}^{T-1} \E[\|\vm_t - \nabla F(\vx_t)\| ] \leq \frac{\sigma}{\alpha} + \sigma \sqrt{\alpha}T + \frac{1-\alpha}{\alpha} \sum_{t=0}^{T-2} \E[\tilde{L}_{t}\eta_t]$. 
\end{lemma}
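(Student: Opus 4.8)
The plan is to track the estimation error $\vepsilon_t \triangleq \vm_t - \nabla F(\vx_t)$ and derive a one-step recursion for it. Writing $\vnu_t \triangleq \nabla f(\vx_t;\xi_t) - \nabla F(\vx_t)$ for the stochastic noise and $\vu_t \triangleq \nabla F(\vx_{t-1}) - \nabla F(\vx_t)$ for the gradient displacement between consecutive iterates, the exponential moving average in \eqref{eq:NGD} gives, after adding and subtracting $(1-\alpha)\nabla F(\vx_{t-1})$,
\begin{equation*}
    \vepsilon_t = (1-\alpha)\vepsilon_{t-1} + (1-\alpha)\vu_t + \alpha \vnu_t.
\end{equation*}
Unrolling this linear recursion from the initialization $\vepsilon_0 = \vnu_0$ separates $\vepsilon_t$ into a \emph{bias} component that accumulates the displacements $\vu_k$ with geometric weights $(1-\alpha)^{t-k+1}$, and a \emph{noise} component $(1-\alpha)^t\vnu_0 + \alpha\sum_{k=1}^t (1-\alpha)^{t-k}\vnu_k$.

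The two components will be handled by different norms, which is the crux of the argument. For the noise component I would use the $L^2$ norm: since the $\vnu_k$ are conditionally mean-zero with variance at most $\sigma^2$ (Assumption~\ref{assum:variance}), the cross terms vanish in expectation, so only the diagonal survives, and the geometric sum $\alpha^2\sum_k (1-\alpha)^{2(t-k)} \le \alpha$ yields $\E[\|\text{noise}_t\|^2] \le (1-\alpha)^{2t}\sigma^2 + \alpha \sigma^2$. Jensen's inequality $\E\|\cdot\| \le \sqrt{\E\|\cdot\|^2}$ together with $\sqrt{a+b}\le\sqrt a + \sqrt b$ then gives $\E[\|\text{noise}_t\|] \le (1-\alpha)^t \sigma + \sqrt\alpha\,\sigma$; summing over $t$ and bounding $\sum_t (1-\alpha)^t \le 1/\alpha$ produces exactly the $\frac{\sigma}{\alpha} + \sigma\sqrt\alpha\, T$ terms. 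For the bias component I would instead apply the triangle inequality, since these terms are correlated and enjoy no cancellation, then swap the order of summation and use $\sum_{t\ge k}(1-\alpha)^{t-k+1} \le \frac{1-\alpha}{\alpha}$ to obtain a bound of $\frac{1-\alpha}{\alpha}\sum_{k}\E[\|\vu_k\|]$.

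It remains to convert the true-gradient displacement $\E[\|\vu_k\|] = \E[\|\nabla F(\vx_{k-1}) - \nabla F(\vx_k)\|]$ into the computable estimate $\tilde L_{k-1}\eta_{k-1}$. Here I would exploit the normalization of the update: $\|\vx_k - \vx_{k-1}\| = \eta_{k-1}$, so that $\tilde L_{k-1}\eta_{k-1} = \|\vg_{k-1}'(\vx_k) - \vg_{k-1}'(\vx_{k-1})\|$. Since $\xi_{k-1}'$ is drawn independently, conditioning on $\vx_{k-1}, \vx_k$ gives $\E[\vg_{k-1}'(\vx_k) - \vg_{k-1}'(\vx_{k-1}) \mid \cdot] = \nabla F(\vx_k) - \nabla F(\vx_{k-1})$, and a conditional Jensen inequality yields $\E[\|\vu_k\|] \le \E[\tilde L_{k-1}\eta_{k-1}]$. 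Re-indexing by $t = k-1$ then matches the claimed $\frac{1-\alpha}{\alpha}\sum_{t=0}^{T-2}\E[\tilde L_t \eta_t]$. The main obstacle I anticipate is bookkeeping the two different geometric sums simultaneously---the $L^2$ accumulation of the noise that saves a factor $\sqrt\alpha$ versus the $L^1$ accumulation of the bias that costs $1/\alpha$---together with correctly pinning down the edge term from the initialization of $\vm_0$; these are precisely the places from which the stated constants $\frac{\sigma}{\alpha}$ and $\sigma\sqrt\alpha\,T$ originate.
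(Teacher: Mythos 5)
Your proposal is correct and follows essentially the same route as the paper's proof: the same one-step recursion for $\vm_t - \nabla F(\vx_t)$, the same unrolling into a noise part (bounded in $L^2$ via vanishing cross terms and Jensen, yielding $\sigma/\alpha + \sigma\sqrt{\alpha}\,T$) and a bias part (triangle inequality, summation swap, geometric sum $\le \frac{1-\alpha}{\alpha}$), and the same conditional-Jensen step converting $\|\nabla F(\vx_{t+1})-\nabla F(\vx_t)\|$ into $\E[\tilde L_t\eta_t]$ using the normalization $\|\vx_{t+1}-\vx_t\|=\eta_t$. The only cosmetic difference is that you keep the $\vnu_0$ term inside the single $L^2$ bound while the paper splits it off by the triangle inequality first; your handling of the independent sample in the last step is, if anything, slightly more careful than the paper's.
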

Lemma~\ref{lem:ema} upper bounds the approximation error in terms of the learning rate $\eta_t$. As we shall see in the next step, this term will also be incorporated into our surrogate loss and be bounded by the regret.

\textbf{Step 3:} 
By summing the inequality in Lemma~\ref{lem:one_step_improv} from $t=0$ to $t=T-1$ and applying 
Lemma~\ref{lem:ema}, we obtain 
\begin{align*}
    & \phantom{{}={}} \E[F(\vx_T) - F(\vx_0)] \\
     &\leq -\frac{\eta}{3} \E \Bigl[\sum_{t=0}^{T-1} \|\nabla F(\vx_t)\| \Bigr] + \E \Bigl[\sum_{t=0}^{T-1} L_t \Bigr] \eta^2 + \frac{8\eta}{3}\E\Bigl[ \sum_{t=0}^{T-1} \|\vm_t - \nabla F(\vx_t)\|\Bigr] \\
    & \phantom{{}={}} + \sum_{t=0}^{T-1} \E\Bigl[- (\eta_t -\eta) \Bigl\langle \vg'(\vw_t), \frac{\vm_t}{\|\vm_t\|}\Bigr\rangle + \frac{L_t}{2}(\eta_t^2-\eta^2)\Bigr] \\
    & \leq -\frac{\eta}{3} \E \Bigl[\sum_{t=0}^{T-1} \|\nabla F(\vx_t)\| \Bigr] + \E \Bigl[\sum_{t=0}^{T-1} L_t \Bigr] \eta^2 + \frac{8\eta}{3}(\frac{\sigma}{\alpha} + \sigma \sqrt{\alpha}T) + \frac{8(1-\alpha)}{3\alpha}\E \Bigl[ \sum_{t=0}^{T-2} \tilde{L}_t \eta_t \eta\Bigr]\\
    & \phantom{{}={}}  + \sum_{t=0}^{T-1} \E\Bigl[- (\eta_t -\eta) \Bigl\langle \vg'(\vw_t), \frac{\vm_t}{\|\vm_t\|}\Bigr\rangle + \frac{L_t}{2}(\eta_t^2-\eta^2)\Bigr].
\end{align*}
Moreover, by Young's inequality, we have $\tilde{L}_t \eta_t \eta \leq \frac{\tilde{L}_t}{2}\eta_t^2 + \frac{\tilde{L}_t}{2}\eta^2  = \tilde{L}_t \eta^2 + (\frac{\tilde{L}_t}{2}\eta_t^2 - \frac{\tilde{L}_t}{2}\eta^2)$. 
Using $\Delta_F = F(\vx_0) - F(\vx^*)  \geq F(\vx_0) - F(\vx_T)$ and recalling the definition of $\ell^{\mathrm{N}}_t$ in \eqref{eq:surrogate_loss_N}, we obtain 
\begin{align*}
    0 &\leq - \frac{\eta}{3} \sum_{t=0}^{T-1} \E[\|\nabla F(\vx_t)\|]   + \frac{8\eta}{3}(\frac{\sigma}{\alpha} + \sigma \sqrt{\alpha}T) + \E\Bigl[{\sum_{t=0}^{T-1}L_{t}} + \frac{8(1-\alpha)\sum_{t=0}^{T-1} \tilde{L}_{t}}{3\alpha} \Bigr]\eta^2 \\
    & \phantom{{}\leq{}}  + \E[\sum_{t=0}^{T-1} (\ell^{\mathrm{N}}_t(\eta_t) - \ell^{\mathrm{N}}_t(\eta))] + \E[{\Delta}_F]. 
  \end{align*}
Now for any $\eta \in [0, \eta^{\max}]$, we can upper bound $\sum_{t=0}^{T-1} (\ell^{\mathrm{N}}_t(\eta_t) - \ell^{\mathrm{N}}_t(\eta)) \leq \mathrm{Reg}_T^{\mathrm{N}}$ by definition, and hence we can choose the value of $\eta$ freely from the interval $[0, \eta^{\max}]$ in the above bound. We now consider the following cases: 
\begin{enumerate}[(i),leftmargin=2em]
  \item \textbf{Case I:} we have $\sum_{t=0}^{T-1} \E[\|\nabla F(\vx_t)\|] \leq 16(\frac{\sigma}{\alpha} + \sigma\sqrt{\alpha}T)$;
  \item \textbf{Case II:} we have $\sum_{t=0}^{T-1} \E[\|\nabla F(\vx_t)\|] \geq 16(\frac{\sigma}{\alpha} + \sigma\sqrt{\alpha}T)$. This further implies that 
  \begin{equation}\label{eq:before_opt_eta}
    0 \leq - \frac{\eta}{6} \sum_{t=0}^{T-1} \E[\|\nabla F(\vx_t)\|]  + \E\Bigl[\sum_{t=0}^{T-1}L_{t} + \frac{8 (1-\alpha)\sum_{t=0}^{T-1}\tilde{L}_{t}}{3\alpha} \Bigr]\eta^2 + \E[\mathrm{Reg}_T^{\mathrm{N}} + \Delta_F].
  \end{equation}
  Moreover, we set the value of $\eta$ as 
  \begin{equation}\label{eq:eta_value}
    \eta = \min\left\{ \frac{\sum_{t=0}^{T-1} \E[\|\nabla F(\vx_t)\|]}{\E[12\sum_{t=0}^{T-1}L_{t} +  \frac{32(1-\alpha)}{\alpha}\sum_{t=0}^{T-1} \tilde{L}_{t}]}, {\eta}^{\max} \right\}.
  \end{equation}
  This again leads to two subcases depending on the value of $\eta$: 
  \begin{itemize}
    \item If $\eta$ takes the first value in \eqref{eq:eta_value}, we obtain from \eqref{eq:before_opt_eta} that 
    \begin{equation*}
      \frac{1}{12} \frac{(\sum_{t=0}^{T-1} \E[\|\nabla F(\vx_t)\|])^2}{\E[12\sum_{t=0}^{T-1}L_{t} +  \frac{32(1-\alpha)}{\alpha}\sum_{t=0}^{T-1} \tilde{L}_{t}]} \leq \E[\mathrm{Reg}_T^{\mathrm{N}} + \Delta_F]. 
    \end{equation*}
    To simplify the notation, let $M = \Delta_F + \mathrm{Reg}_T^{\mathrm{N}}$. 
    With some algebraic manipulation and using the fact that $\sqrt{a+b} \leq \sqrt{a} + \sqrt{b}$, we obtain 
\begin{equation*}
    \sum_{t=0}^{T-1} \E[\|\nabla F(\vx_t)\|] \leq 12 \sqrt{\E[M] \E\Bigl[\sum_{t=0}^{T-1}L_{t}\Bigr]} + 8\sqrt{\frac{6(1-\alpha)}{\alpha}}\sqrt{\E[M]\E\Bigl[\sum_{t=0}^{T-1} \tilde{L}_{t}\Bigr]}.
\end{equation*}

    \item If $\eta$ takes the second value in \eqref{eq:eta_value}, then $\eta = {\eta}^{\max} \leq  \frac{\sum_{t=0}^{T-1} \E[\|\nabla F(\vx_t)\|]}{\E[12\sum_{t=0}^{T-1}L_{t} +  \frac{32(1-\alpha)}{\alpha}\sum_{t=0}^{T-1} \tilde{L}_{t}]}$. In this case, we obtain from \eqref{eq:before_opt_eta} that 
    \begin{equation*}
      \frac{\eta^{\max}}{12}\sum_{t=0}^{T-1} \E[\|\nabla F(\vx_t)\|] \leq 
      \E[M] \quad \Rightarrow \quad \sum_{t=0}^{T-1} \E[\|\nabla F(\vx_t)\|] \leq \frac{12\E[M]}{\eta^{\max}}.  
    \end{equation*}
  \end{itemize}
\end{enumerate}
Combining the upper bounds in all cases and using the definition of $L_T^{\mathrm{avg}}$, we can deduce that 
\begin{equation}\label{eq:before_choosing_alpha}
  \sum_{t=0}^{T-1} \E[\|\nabla F(\vx_t)\|] \leq 16(\frac{\sigma}{\alpha} + \sigma\sqrt{\alpha}T) \!+\!12 \sqrt{\E[M] L_T^{\mathrm{avg}} T} \!+\! 8\sqrt{\frac{6(1-\alpha)}{\alpha}}\sqrt{\E[M]L_T^{\mathrm{avg}} T}+ \frac{12\E[M]}{\eta^{\max}}. %
\end{equation}
Finally, we can choose the parameter $\alpha$ to optimize the above upper bound. Specifically, we let  
\begin{equation*}
  \alpha = \min\{ \frac{\sqrt{L_T^{\mathrm{avg}}\E[M] }}{\sigma\sqrt{T}},1\}.
\end{equation*}
If $\frac{\sqrt{L_T^{\mathrm{avg}}\E[M] }}{\sigma\sqrt{T}} \leq 1$, then we have $\frac{16\sigma}{\alpha} \leq \frac{16\sigma^2\sqrt{T}}{\sqrt{L_T^{\mathrm{avg}}\E[M]}}$, $16 \sigma \sqrt{\alpha}T \leq 16{\sigma^{1/2}(L^{\mathrm{avg}}_T\E[M])^{1/4}}{T^{3/4}}$, and $8\sqrt{\frac{6(1-\alpha)}{\alpha}}\sqrt{\E[M]L_T^{\mathrm{avg}} T} \leq 8\sqrt{6} {\sigma^{1/2}(L^{\mathrm{avg}}_T\E[M])^{1/4}}{T^{3/4}}$.  Otherwise, if $\frac{\sqrt{L_T^{\mathrm{avg}}\E[M] }}{\sigma\sqrt{T}} > 1$, then $\alpha = 1$ and $\frac{16\sigma}{\alpha} = 16 \sigma \leq \frac{16\sqrt{L_T^{\mathrm{avg}}\E[M] }}{\sqrt{T}}$, $16 \sigma \sqrt{\alpha}T \leq 16 \sigma T \leq 16 {\sqrt{L_T^{\mathrm{avg}}\E[M] T}}$, and $8\sqrt{\frac{6(1-\alpha)}{\alpha}}\sqrt{\E[M]L_T^{\mathrm{avg}} T} = 0$.   Hence, combining both cases, we conclude that 
\begin{align*}
    & \phantom{{}={}}16(\frac{\sigma}{\alpha} + \sigma\sqrt{\alpha}T)+ 8\sqrt{\frac{6(1-\alpha)}{\alpha}}\sqrt{\E[M]L_T^{\mathrm{avg}} T} \\
    &\leq \frac{16\sigma^2\sqrt{T}}{\sqrt{L_T^{\mathrm{avg}}\E[M]}} + (16+8\sqrt{6}){\sigma^{1/2}(L^{\mathrm{avg}}_T\E[M])^{1/4}}{T^{3/4}} + 32{\sqrt{L_T^{\mathrm{avg}}\E[M] T}}.
\end{align*}
By using the above bound and dividing both sides by $T$ in \eqref{eq:before_choosing_alpha}, we arrive at
\begin{align*}
  \frac{1}{T}\sum_{t=0}^{T-1} \E[\|\nabla F(\vx_t)\|] &\leq \frac{(16+8\sqrt{6}){\sigma^{1/2}(L^{\mathrm{avg}}_T\E[M])^{1/4}}}{T^{1/4}} + \frac{16\sigma^2}{\sqrt{L_T^{\mathrm{avg}}\E[M]T}} + 44\frac{\sqrt{L_T^{\mathrm{avg}}\E[M]}}{\sqrt{T}} \\
  & \phantom{{}\leq{}} + \frac{12\E[M]}{\eta^{\max} T}. 
\end{align*}
This completes the proof of Theorem~\ref{thm:NGD}.

\subsection{Proof of Lemma~\ref{lem:one_step_improv}}\label{appen:one_step_improv}
Similar to the arguments in \cref{sec:online_learning}, we first apply the fundamental theorem of calculus to get $F(\vx_{t+1}) - F(\vx_t) = \langle \bm{\nabla}_t, \vx_{t+1} - \vx_t \rangle = -\eta_t \langle \bm{\nabla}_t, \frac{\vm_t}{\|\vm_t\|} \rangle$. Since $\bm{\nabla}_t = \E_{\lambda_t}[\nabla F(\vw_t)] = \E_{\lambda_t, \xi_t'}[\vg_t'(\vw_t)]$, we further have 
\begin{equation}\label{eq:FTC-NGD}
    F(\vx_{t+1}) - F(\vx_t) = - \eta_t \E_{\lambda_t, \xi_t'} \Bigl[ \Bigl\langle \vg_t'(\vw_t), \frac{\vm_t}{\|\vm_t\|}\Bigr\rangle \Bigr]. 
\end{equation} 
Next, we decompose the right-hand side of \eqref{eq:FTC-NGD} as 
\begin{align*}
    \Bigl\langle \vg_t'(\vw_t), \frac{\vm_t}{\|\vm_t\|}\Bigr\rangle &= \Bigl\langle \vg_t'(\vx_t), \frac{\vm_t}{\|\vm_t\|}\Bigr\rangle +  \Bigl\langle \vg_t'(\vw_t) - \vg_t'(\vx_t), \frac{\vm_t}{\|\vm_t\|}\Bigr\rangle \\
    &\geq  \Bigl\langle \vg_t'(\vx_t), \frac{\vm_t}{\|\vm_t\|}\Bigr\rangle - \|\vg_t'(\vw_t) - \vg_t'(\vx_t)\|, 
\end{align*}
where we used Cauchy-Schwarz inequality in the last step. Using the definition of $L_t$, we have 
\begin{equation}\label{eq:Lt-NGD}
    \|\vg_t'(\vw_t) - \vg_t'(\vx_t)\| \leq L_t \|\vw_t-\vx_t\| = L_t\lambda_t \|\vx_{t+1} - \vx_t\| \leq L_t \eta_t. 
\end{equation}
Moreover,  since $\vg_t'(\vx_t)$ and $\vm_t$ are independent conditioned on $\vx_t$, we further have $\E[\langle \vg_t'(\vx_t), \frac{\vm_t}{\|\vm_t\|}\rangle] = \E[\langle \nabla F(\vx_t), \frac{\vm_t}{\|\vm_t\|}\rangle]$, which is further lower bounded in the following lemma. 
\begin{lemma}\label{lem:normalized_align}
    We have $\langle \nabla F(\vx_t), \frac{\vm_t}{\|\vm_t\|}\rangle \geq \frac{1}{3} \|\nabla F(\vx_t)\| - \frac{8}{3}\|\vm_t - \nabla F(\vx_t)\|$. 
\end{lemma}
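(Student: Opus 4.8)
The plan is to reduce everything to the error vector $\vepsilon_t \triangleq \vm_t - \nabla F(\vx_t)$, so that $\|\vepsilon_t\|$ is exactly the quantity appearing on the right-hand side, and then argue by a two-case split according to whether this error is small or large relative to $\|\nabla F(\vx_t)\|$. The guiding intuition is that when the momentum estimate is accurate, the normalized direction $\vm_t/\|\vm_t\|$ is essentially aligned with $\nabla F(\vx_t)$ and we can extract almost the full norm $\|\nabla F(\vx_t)\|$; when the estimate is poor, $\vm_t$ may be tiny and its direction arbitrary, so the best guarantee is the trivial Cauchy--Schwarz floor $-\|\nabla F(\vx_t)\|$, which the slack term $-\tfrac{8}{3}\|\vepsilon_t\|$ is designed to absorb.

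Concretely, I would first record the two elementary bounds that drive each case. Writing $\vm_t = \nabla F(\vx_t) + \vepsilon_t$ and expanding,
\[
\Bigl\langle \nabla F(\vx_t), \tfrac{\vm_t}{\|\vm_t\|}\Bigr\rangle = \|\vm_t\| - \Bigl\langle \vepsilon_t, \tfrac{\vm_t}{\|\vm_t\|}\Bigr\rangle \geq \|\vm_t\| - \|\vepsilon_t\| \geq \|\nabla F(\vx_t)\| - 2\|\vepsilon_t\|,
\]
where the two inequalities are Cauchy--Schwarz against the unit vector and the triangle inequality $\|\vm_t\| \geq \|\nabla F(\vx_t)\| - \|\vepsilon_t\|$. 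Separately, Cauchy--Schwarz against the unit vector $\vm_t/\|\vm_t\|$ yields the universal floor $\langle \nabla F(\vx_t), \vm_t/\|\vm_t\|\rangle \geq -\|\nabla F(\vx_t)\|$ (valid since the normalized update is only defined when $\vm_t \neq 0$).

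The split is then made at $\|\vepsilon_t\| = \tfrac12\|\nabla F(\vx_t)\|$. In the small-error regime $\|\vepsilon_t\| < \tfrac12\|\nabla F(\vx_t)\|$, we have $\|\vm_t\| > \tfrac12\|\nabla F(\vx_t)\| > 0$, so the first display applies; since $\|\nabla F(\vx_t)\| - 2\|\vepsilon_t\| \geq \tfrac13\|\nabla F(\vx_t)\| - \tfrac83\|\vepsilon_t\|$ holds trivially (their difference is $\tfrac23\|\nabla F(\vx_t)\| + \tfrac23\|\vepsilon_t\| \geq 0$), the claim follows. In the large-error regime $\|\vepsilon_t\| \geq \tfrac12\|\nabla F(\vx_t)\|$, I would instead invoke the floor: here $\tfrac13\|\nabla F(\vx_t)\| - \tfrac83\|\vepsilon_t\| \leq \tfrac13\|\nabla F(\vx_t)\| - \tfrac43\|\nabla F(\vx_t)\| = -\|\nabla F(\vx_t)\| \leq \langle \nabla F(\vx_t), \vm_t/\|\vm_t\|\rangle$, which is exactly the asserted inequality. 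Combining the two cases completes the argument.

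The only genuinely delicate point is the large-error regime, where $\vm_t$ can be arbitrarily small and its direction uninformative; there the constants are not free to improve. The threshold $\tfrac12$ and the coefficients $\tfrac13,\tfrac83$ are precisely matched so that the penalty $\tfrac83\|\vepsilon_t\|$ exactly pays for the worst-case misalignment $-\|\nabla F(\vx_t)\|$ at the boundary of the split. I expect no further obstacles, since both cases rest solely on Cauchy--Schwarz and the triangle inequality.
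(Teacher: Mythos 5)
Your proof is correct, and its skeleton matches the paper's: both arguments split at $\|\vm_t - \nabla F(\vx_t)\| = \frac{1}{2}\|\nabla F(\vx_t)\|$, and your large-error case is essentially verbatim the paper's case (ii), namely the Cauchy--Schwarz floor $-\|\nabla F(\vx_t)\|$ combined with the arithmetic $\frac{1}{3} - \frac{8}{3}\cdot\frac{1}{2} = -1$. Where you genuinely differ is the small-error case. The paper expands around $\nabla F(\vx_t)$, writing $\langle \nabla F(\vx_t), \vm_t\rangle = \|\nabla F(\vx_t)\|^2 + \langle \nabla F(\vx_t), \vepsilon_t\rangle$, and needs both consequences of the case hypothesis: Cauchy--Schwarz to lower bound the cross term by $-\frac{1}{2}\|\nabla F(\vx_t)\|^2$, and the upper bound $\|\vm_t\| \leq \frac{3}{2}\|\nabla F(\vx_t)\|$ to control the division by $\|\vm_t\|$. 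You instead expand around $\vm_t$, getting $\langle \nabla F(\vx_t), \vm_t/\|\vm_t\|\rangle = \|\vm_t\| - \langle \vepsilon_t, \vm_t/\|\vm_t\|\rangle \geq \|\vm_t\| - \|\vepsilon_t\| \geq \|\nabla F(\vx_t)\| - 2\|\vepsilon_t\|$. This buys something the paper's version does not: your chain never invokes the case hypothesis, so it holds whenever $\vm_t \neq 0$, and since $\|\nabla F(\vx_t)\| - 2\|\vepsilon_t\| \geq \frac{1}{3}\|\nabla F(\vx_t)\| - \frac{8}{3}\|\vepsilon_t\|$ always holds (the difference is $\frac{2}{3}\|\nabla F(\vx_t)\| + \frac{2}{3}\|\vepsilon_t\| \geq 0$, as you compute), your first display alone proves the lemma --- in fact the strictly stronger bound with constants $(1,2)$ in place of $(\frac{1}{3},\frac{8}{3})$ --- so the entire case split is redundant in your write-up. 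One quibble: your closing remark that in the large-error regime ``the constants are not free to improve'' is contradicted by your own argument; the constants $(\frac{1}{3},\frac{8}{3})$ are an artifact of the looseness of the paper's good-case expansion (which forces the bad case to match it at the threshold), not of any worst-case configuration of $\vm_t$.
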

\begin{proof}
    Our proof is inspired by \cite[Lemma 2]{cutkosky2020momentum}. We consider two cases: 
    \begin{enumerate}[(i)]
        \item If $\|{\vm}_t - \nabla F(\vx_t)\| \leq \frac{1}{2}\|\nabla F(\vx_t)\|$, then by the triangle inequality, we have $\|{\vm}_t\| \leq \frac{3}{2}\|\nabla F(\vx_t)\|$. Therefore, we have 
    \begin{align*}
      \frac{1}{\|\vm_t\|}\langle \nabla F(\vx_t), {\vm}_t \rangle &= \frac{1}{\|\vm_t\|}(\|\nabla F(\vx_t)\|^2 + \langle \nabla F(\vx_t), \vm_t - \nabla F(\vx_t)\rangle ) \\
      &\geq \frac{1}{\|\vm_t\|}(\|\nabla F(\vx_t)\|^2 - \frac{1}{2}\|\nabla F(\vx_t)\|^2)  \geq \frac{1}{3}\|\nabla F(\vx_t)\|,
    \end{align*}
    where we used $\|{\vm}_t - \nabla F(\vx_t)\| \leq \frac{1}{2}\|\nabla F(\vx_t)\|$ in the first inequality and $\|{\vm}_t\| \leq \frac{3}{2}\|\nabla F(\vx_t)\|$ in the second one. Since $\frac{8}{3}\|\vm_t - \nabla F(\vx_t)\| \geq 0$, the result in Lemma~\ref{lem:normalized_align} holds under this case. 
    \item Otherwise, if $\|{\vm}_t - \nabla F(\vx_t)\| > \frac{1}{2}\|\nabla F(\vx_t)\|$, we can instead use Cauchy-Schwarz inequality to bound 
    \begin{align*}
      \frac{1}{\|\vm_t\|}\langle \nabla F(\vx_t), {\vm}_t \rangle \geq -\|\nabla F(\vx_t)\| &= \frac{1}{3}\|\nabla F(\vx_t)\| - \frac{4}{3}\|\nabla F(\vx_t)\| \\ &\geq \frac{1}{3}\|\nabla F(\vx_t)\| - \frac{8}{3}\|{\vm}_t - \nabla F(\vx_t)\|,
    \end{align*}
    where we used $\|{\vm}_t - \nabla F(\vx_t)\| > \frac{1}{2}\|\nabla F(\vx_t)\|$ in the last inequality. 
    \end{enumerate}
    This completes the proof.
\end{proof}

Combining \eqref{eq:Lt-NGD} and Lemma~\ref{lem:normalized_align}, we obtain that 
\begin{equation*}
    \E \Bigl[ \Bigl\langle \vg_t'(\vw_t), \frac{\vm_t}{\|\vm_t\|}\Bigr\rangle \Bigr] \geq \E \Bigl[ \frac{1}{3}\|\nabla F(\vx_t)\| - \frac{8}{3}\|{\vm}_t - \nabla F(\vx_t)\| - L_t \eta_t \Bigr]. 
\end{equation*}
Hence, it further follows from \eqref{eq:FTC-NGD} that 
\begin{align*}
   \E[ F(\vx_{t+1}) - F(\vx_t) ]  &= -\E \Bigl [(\eta_t - \eta) \Bigl\langle \vg_t'(\vw_t), \frac{\vm_t}{\|\vm_t\|} \Bigr\rangle \Bigr ] - \eta \E \Bigl[ \Bigl\langle \vg_t'(\vw_t), \frac{\vm_t}{\|\vm_t\|}\Bigr\rangle \Bigr] \\
   &\leq -\E \Bigl [(\eta_t - \eta) \Bigl\langle \vg_t'(\vw_t), \frac{\vm_t}{\|\vm_t\|} \Bigr\rangle  - \frac{\eta \|\nabla F(\vx_t)\|}{3} + \frac{8\eta \|{\vm}_t - \nabla F(\vx_t)\|}{3}  + L_t \eta_t \eta\Bigr ].
\end{align*}
By using Young's inequality $\eta_t \eta \leq \frac{\eta_t^2}{2} + \frac{\eta^2}{2}$ and rearranging, we obtain the inequality in \cref{lem:one_step_improv}. 

\subsection{Proof of Lemma~\ref{lem:ema}}
  From the update rule in \eqref{eq:NGD}, we can write
  \begin{equation}\label{eq:resursive_m}
    \begin{aligned}
    \vm_t - \nabla F(\vx_t) &= (1-\alpha) (\vm_{t-1} - \nabla F(\vx_{t-1})) +\alpha(\nabla f(\vx_t; \xi_t) - \nabla F(\vx_t)) \\ 
    &\phantom{{}={}}+ (1-\alpha)(\nabla F(\vx_{t-1}) - \nabla F(\vx_t)). %
  \end{aligned}
  \end{equation}  
  Define the stochastic gradient error $\ve_t = \nabla f(\vx_t; \xi_t) - \nabla F(\vx_t)$. By Assumption~\ref{assum:variance}, we have $\E[\ve_t] = 0$ and $\E[\|\ve_t\|^2] \leq \sigma^2$. Moreover, by multiplying both sides of \eqref{eq:resursive_m} with $(1-\alpha)^{-t}$, we have 
  \begin{align*}
    (\vm_t - \nabla F(\vx_t)) (1-\alpha)^{-t} &= (\vm_{t-1} - \nabla F(\vx_{t-1})) (1-\alpha)^{-t+1} + \alpha\ve_t  (1-\alpha)^{-t} \\
    &\phantom{{}={}}+ (\nabla F(\vx_{t-1}) - \nabla F(\vx_t)) (1-\alpha)^{-t+1}.
  \end{align*} 
  Note that we set $\vm_0 = \nabla f(\vx_0; \xi_0)$. Thus, by summing the above inequality, we obtain 
  \begin{align*}
    (\vm_t - \nabla F(\vx_t)) (1-\alpha)^{-t} &= \ve_0 + \sum_{s=1}^t \ve_s \alpha(1-\alpha)^{-s} 
    + \sum_{s=1}^t (\nabla F(\vx_{s-1}) - \nabla F(\vx_s)) (1-\alpha)^{-s+1}.
  \end{align*}
  Therefore, it follows from the triangle inequality that
  \begin{equation}\label{eq:after_triangle}
    \|\vm_t - \nabla F(\vx_t)\| \leq \|\ve_0\| (1-\alpha)^t + \left\|\sum_{s=1}^t \ve_s \alpha(1-\alpha)^{t-s}  \right\| + \sum_{s = 1}^t \|\nabla F(\vx_{s-1}) - \nabla F(\vx_s)\|  (1-\alpha)^{t-s+1}.
  \end{equation}
  By Jensen's inequality and the fact that $\{\xi_s\}_{s=1}^t$ are i.i.d. sampled from $\mathcal{D}$, we have $\E[\|\ve_0\|] \leq \sqrt{\E[\|\ve_0\|^2]} = \sigma$ and 
  \begin{equation*}
    \E\left\|\sum_{s=1}^t \ve_s \alpha(1-\alpha)^{t-s}  \right\| \leq \sqrt{\E \Bigl\|\sum_{s=1}^t \ve_s \alpha(1-\alpha)^{t-s} \Bigr\|^2} \leq \sqrt{\sum_{s=1}^t \sigma^2 \alpha^2(1-\alpha)^{2(t-s)}}. 
  \end{equation*}
  Moreover, it also follows from Jensen's inequality that $\E[\|\nabla F(\vx_{s-1}) - \nabla F(\vx_s)\| ]\leq \E[\|\nabla f(\vx_{s}; \xi_{s}) - \nabla f(\vx_{s-1}; \xi_{s})\|] = \tilde{L}_{s-1}\|\vx_{s} - \vx_{s-1}\| = \tilde{L}_{s-1} \eta_{s-1}$. Hence, by taking the expectation on both sides of \eqref{eq:after_triangle}, we further have
  \begin{align*}
    \E[\|\vm_t - \nabla F(\vx_t)\| ] &\leq \sigma (1-\alpha)^t + \sigma \alpha\sqrt{\sum_{s=1}^t (1-\alpha)^{2(t-s)}} + \sum_{s = 1}^{t}  \E[\tilde{L}_{s-1}\eta_{s-1}] (1-\alpha)^{t-s+1} \\
    & \leq \sigma (1-\alpha)^t + \sigma \alpha \sqrt{\frac{1}{1-(1-\alpha)^2}} + \sum_{s = 0}^{t-1}  \E[\tilde{L}_{s}\eta_{s}](1-\alpha)^{t-s} \\
    & \leq \sigma (1-\alpha)^t + \sigma \sqrt{\alpha} + \sum_{s = 0}^{t-1}  \E[\tilde{L}_{s}\eta_{s}] (1-\alpha)^{t-s}.
  \end{align*}
By summing the above inequality from $t=0$ to $t=T-1$, we obtain that 
\begin{equation*}
    \sum_{t=0}^{T-1} \E[\|\vm_t - \nabla F(\vx_t)\| ] \leq \sigma \sum_{t=0}^{T-1} (1-\alpha)^t  + \sigma \sqrt{\alpha} T + \sum_{t=0}^{T-1}\sum_{s = 0}^{t-1}  \E[\tilde{L}_{s}\eta_{s}] (1-\alpha)^{t-s}
\end{equation*}
Since $\sum_{t=0}^{T-1} (1-\alpha)^t \leq \frac{1}{\alpha}$ and $\sum_{t=0}^{T-1}\sum_{s = 0}^{t-1}  \E[\tilde{L}_{s}\eta_{s}] (1-\alpha)^{t-s} = \sum_{s=0}^{T-2}\sum_{t = s+1}^{T-1}  \E[\tilde{L}_{s}\eta_{s}] (1-\alpha)^{t-s} \leq \frac{1-\alpha}{\alpha} \sum_{s=0}^{T-2} \E[\tilde{L}_{s}\eta_{s}] $, we obtain \cref{lem:ema}.  

\section{Proof of Lemma~\ref{lem:regret_bound}}\label{appen:regret}
As discussed in \cref{sec:convergence}, our update for $\eta$ can be viewed as an instance of the optimistic FTRL algorithm. Therefore, we can inovke the convergence bound in \cite[Theorem 7.39]{Orabona2019}, where $\psi_1 = \dots =\psi_T = \frac{\delta}{2}\eta^2$ and $\tilde{\ell}_{t+1}(\eta) = -\eta \langle \vg_{t+1}(\vx_{t+1}), \frac{\vm_{t+1}}{\|\vm_{t+1}\|}  \rangle$. Moreover, note that $\frac{\delta}{2}\eta^2 + \sum_{s=0}^t \ell^{\mathrm{N}}_t(\eta)$ is $(\delta + \sum_{s=0}^t (L_s + \frac{8(1-\alpha)}{3\alpha}\tilde{L}_s))$-strongly convex, and $|(\ell_t^{\mathrm{N}})'(\eta_t) - \tilde{\ell}'_t(\eta_t)| = |-\langle \vg_t'(\vw_t) - \vg_t(\vx_t), \frac{\vm_t}{\|\vm_t\|}  \rangle + L_t \eta_t + \frac{8(1-\alpha)}{3\alpha}\tilde{L}_t \eta_t| \leq \|\vg_t'(\vw_t) - \vg_t(\vx_t)\| + L_t \eta_t + \frac{8(1-\alpha)}{3\alpha}\tilde{L}_t \eta_t$. Hence, we have 
  \begin{equation}\label{eq:regret_OFTRL}
    \sum_{t=0}^{T-1} (\ell_t(\eta_t) - \ell_t(\eta) )\leq \frac{\delta}{2}\eta^2 + \sum_{t=0}^{T-1} \frac{(L_{t} \eta_t + \frac{8 (1-\alpha) }{3\alpha}\tilde{L}_{t}\eta_t+ \|\vg_t'(\vw_t) - \vg_t(\vx_t)\|)^2}{2\delta+2\sum_{s=0}^{t} (L_{s} + \frac{8 (1-\alpha) \tilde{L}_{s}}{3\alpha})}. %
  \end{equation}
Moreover, by the triangle inequality and the definition of $L_t$, we have $\|\vg_t'(\vw_t) - \vg_t(\vx_t)\| = \|\nabla f(\vw_t; \xi_{t}') - \nabla f(\vx_t; \xi_t)\| \leq \|\nabla f(\vw_t; \xi_{t}') - \nabla f(\vx_t; \xi_t')\| + \|\nabla f(\vx_t; \xi_t') - \nabla f(\vx_t; \xi_t)\| \leq L_t \eta_t + \|\nabla f(\vx_t; \xi_t')- \nabla f(\vx_t; \xi_t)\|$. So we can further bound the summand in \eqref{eq:regret_OFTRL} by  
\begin{align*}
    & \phantom{{}={}} \frac{(2L_t \eta_t + \frac{8(1-\alpha)}{3\alpha}\tilde{L}_t \eta_t + \|\nabla f(\vx_t; \xi_t')- \nabla f(\vx_t; \xi_t)\|)^2}{2\delta+2\sum_{s=0}^{t} (L_{s} + \frac{8 (1-\alpha) \tilde{L}_{s}}{3\alpha})} \\
    & \leq \frac{{\eta}_t^2(2L_t + \frac{8(1-\alpha)}{3\alpha}\tilde{L}_t)^2 + \|\nabla f(\vx_t; \xi_t')- \nabla f(\vx_t; \xi_t)\|^2}{\delta+\sum_{s=0}^{t} (L_{s} + \frac{8 (1-\alpha) \tilde{L}_{s}}{3\alpha})}. 
\end{align*} 
In the following, we will upper bound the two sums 
\begin{equation*}
    \sum_{t=0}^{T-1} \frac{{\eta}_t^2(2L_t + \frac{8(1-\alpha)}{3\alpha}\tilde{L}_t)^2}{\delta+\sum_{s=0}^{t} (L_{s} + \frac{8 (1-\alpha) \tilde{L}_{s}}{3\alpha})} \quad \text{and} \quad \sum_{t=0}^{T-1} \frac{\|\nabla f(\vx_t; \xi_t')- \nabla f(\vx_t; \xi_t)\|^2}{\delta+\sum_{s=0}^{t} (L_{s} + \frac{8 (1-\alpha) \tilde{L}_{s}}{3\alpha})}
\end{equation*}
separately.

By our assumption,  $\max\{L_t, \tilde{L}_t\} \leq L^{\max}$ with probability one and $\eta_t \leq \eta^{\max}$. Thus, we can derive 
\begin{equation}\label{eq:before_sum_to_log}
    \sum_{t=0}^{T-1} \frac{{\eta}_t^2(2L_t + \frac{8(1-\alpha)}{3\alpha}\tilde{L}_t)^2}{\delta+\sum_{s=0}^{t} (L_{s} + \frac{8 (1-\alpha) \tilde{L}_{s}}{3\alpha})} \leq \frac{28(\eta^{\max})^2 L^{\max}}{3\alpha} \sum_{t=0}^{T-1} \frac{L_{t} + \frac{8 (1-\alpha) \tilde{L}_{t}}{3\alpha}}{\delta+\sum_{s=0}^{t} (L_{s} + \frac{8 (1-\alpha) \tilde{L}_{s}}{3\alpha})}.
\end{equation}
Now we can apply the following lemma. 
\begin{lemma}\label{lem:sum_to_log}
    For any nonnegative sequence $\{a_t\}_{t=0}^{T-1}$ and $\delta > 0$, it holds that $\sum_{t=0}^{T-1} \frac{a_t}{\delta + \sum_{s=0}^t a_s} \leq \log\left(1+\frac{\sum_{t=0}^{T-1}a_t}{\delta}\right)$. 
\end{lemma}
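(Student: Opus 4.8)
The plan is to collapse the sum into a telescoping sum of logarithms by means of a single elementary inequality. First I would introduce the running partial sums $S_t \triangleq \delta + \sum_{s=0}^{t} a_s$ for $t \geq 0$, together with the convention $S_{-1} \triangleq \delta$. This gives $a_t = S_t - S_{t-1}$ and, crucially, $S_t \geq \delta > 0$ for every $t$, so all denominators are strictly positive. The quantity of interest then reads $\sum_{t=0}^{T-1} \frac{a_t}{S_t} = \sum_{t=0}^{T-1} \frac{S_t - S_{t-1}}{S_t}$.

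The key step is the per-term bound $\frac{S_t - S_{t-1}}{S_t} \leq \log \frac{S_t}{S_{t-1}}$, valid whenever $S_t \geq S_{t-1} > 0$. I would derive it from the concavity inequality $\log x \geq 1 - \tfrac{1}{x}$ applied at $x = S_t/S_{t-1} \geq 1$, where $x \geq 1$ because $a_t \geq 0$ forces $S_t \geq S_{t-1}$; rearranging yields $\log\frac{S_t}{S_{t-1}} \geq 1 - \frac{S_{t-1}}{S_t} = \frac{S_t - S_{t-1}}{S_t}$. When $a_t = 0$ both sides vanish, so the inequality holds trivially in that degenerate case as well.

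Summing this bound over $t = 0, \ldots, T-1$, the right-hand side telescopes, and I obtain
\[
    \sum_{t=0}^{T-1} \frac{a_t}{S_t} \leq \sum_{t=0}^{T-1} \bigl(\log S_t - \log S_{t-1}\bigr) = \log S_{T-1} - \log S_{-1} = \log\frac{\delta + \sum_{t=0}^{T-1} a_t}{\delta},
\]
which is exactly the claimed bound $\log\bigl(1 + \tfrac{1}{\delta}\sum_{t=0}^{T-1} a_t\bigr)$.

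I do not expect any substantive obstacle: this is a standard workhorse lemma in AdaGrad-type regret analyses, and the only points requiring care are the validity of the elementary logarithmic inequality and the strict positivity of the denominators, both of which are secured by $\delta > 0$ and the nonnegativity of the $a_t$.
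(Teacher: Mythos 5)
Your proof is correct and follows essentially the same route as the paper: the per-term bound $\frac{S_t - S_{t-1}}{S_t} \leq \log\frac{S_t}{S_{t-1}}$ via $\log x \geq 1 - \frac{1}{x}$ is exactly the paper's inequality $1 - x \leq \log\frac{1}{x}$ in disguise, followed by the same telescoping of logarithms.
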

\begin{proof}
    For any $t\geq 0$, we have $\frac{a_t}{\delta + \sum_{s=0}^t a_s} = 1 - \frac{\delta+ \sum_{s=0}^{t-1} a_s}{\delta + \sum_{s=0}^t a_s} \leq \log(\frac{\delta + \sum_{s=0}^t a_s}{\delta+ \sum_{s=0}^{t-1} a_s})$, where we used the fact that $1-x \leq \log(\frac{1}{x})$ for any $x \geq 0$. Hence, by summing the inequality from $t=0$ to $t=T-1$ we obtain $\sum_{t=0}^{T-1} \frac{a_t}{\delta + \sum_{s=0}^t a_s} \leq \log (\frac{\delta  + \sum_{t=0}^{T-1} a_t}{\delta})  = \log(1+\frac{\sum_{t=0}^{T-1} a_t}{\delta})$. 
\end{proof}
Hence, by applying \cref{lem:sum_to_log} to \eqref{eq:before_sum_to_log}, we get 
\begin{align*}
    \sum_{t=0}^{T-1} \frac{{\eta}_t^2(2L_t + \frac{8(1-\alpha)}{3\alpha}\tilde{L}_t)^2}{\delta+\sum_{s=0}^{t} (L_{s} + \frac{8 (1-\alpha) \tilde{L}_{s}}{3\alpha})} &\leq\frac{28(\eta^{\max})^2 L^{\max}}{3\alpha} \log\Bigl( 1+ \frac{\sum_{t=0}^{T-1} (L_{t} + \frac{8 (1-\alpha) \tilde{L}_{t}}{3\alpha})}{\delta} \Bigr) \\
    &\leq \frac{28(\eta^{\max})^2 L^{\max}}{3\alpha} \log\Bigl( 1+ \frac{ 11 L^{\max}}{3\alpha\delta}T \Bigr). 
\end{align*} 
By our choice of $\eta^{\max} = \sqrt{\alpha} \bar{\eta}$, it becomes $\bigO\left(\bar{\eta}^2 L^{\max} \log\left(1+\frac{L^{\max} }{\alpha \delta}T\right)\right)$. 
For the second term, by our assumption, we have $\frac{1}{t+1} \sum_{s=0}^t L_s \geq M^{\mathrm{avg}}$ with probability one. Furthermore, using Assumption~\ref{assum:variance}, we have 
\begin{align*}
    \E \Bigl[ \sum_{t=0}^{T-1} \frac{\|\nabla f(\vx_t; \xi_t')- \nabla f(\vx_t; \xi_t)\|^2}{\delta+\sum_{s=0}^{t} (L_{s} + \frac{8 (1-\alpha) \tilde{L}_{s}}{3\alpha})} \Bigr] &\leq \sum_{t=0}^{T-1} \frac{\E [\|\nabla f(\vx_t; \xi_t')- \nabla f(\vx_t; \xi_t)\|^2]}{M^{\mathrm{avg}} (t+1)} \\
    & = \sum_{t=0}^{T-1} \frac{2 \sigma^2}{M^{\mathrm{avg}} (t+1)} \leq \frac{2\sigma^2}{M^{\mathrm{avg}}}(1+ \log (T)).
\end{align*}
\cref{lem:regret_bound} now follows from combining the above two bounds. 

\end{document}